\setlist[enumerate]{leftmargin=.5in}
\setlist[itemize]{leftmargin=.5in}
\DeclareMathOperator{\argmin}{argmin}
\newcommand{\E}{\mathrm{E}}
\newcommand{\eps}{\epsilon}
\newcommand{\bbm}{\begin{bmatrix}}
\newcommand{\ebm}{\end{bmatrix}}
\newcommand{\R}{\mathrm{R}}
\newcommand{\one}{\mathbbm{1}}
\def\P{\mathbb{P}}
\def\E{\mathbb{E}}
\def\R{\mathbb{R}}
\def\d{\partial}
\DeclareMathOperator{\Tr}{Tr}
\title{Combining resampling and reweighting for faithful stochastic optimization
\thanks{
Submitted to the editors DATE.
\funding{The work of L.Y. is partially supported by the U.S. Department of Energy, Office of Science, Office of Advanced
Scientific Computing Research, Scientific Discovery through Advanced Computing (SciDAC) program and also by the National Science Foundation under award DMS-1818449. J.A. was supported by Joe Oliger Fellowship from Stanford University.}}}
\author{Jing An\thanks{Max-Planck Institute for Mathematics in the Sciences, Inselstr. 22, 04103 Leipzig, Germany
  (\email{jing.an@mis.mpg.de}).}
\and Lexing Ying\thanks{Department of Mathematics, Stanford University, Stanford, CA 94305, USA
  (\email{lexing@stanford.edu})}}
\begin{document}
\maketitle
  
%




\begin{abstract}

Many machine learning and data science tasks require solving non-convex optimization problems. When
the loss function is a sum of multiple terms, a popular method is the stochastic gradient
descent. Viewed as a process for sampling the loss function landscape, the stochastic gradient
descent is known to prefer flat minima. Though this is desired for certain optimization 
problems such as in deep learning, it causes issues when the goal is to find the global minimum,
especially if the global minimum resides in a sharp valley.

Illustrated with a simple motivating example, we show that the fundamental reason is that the
difference in the Lipschitz constants of multiple terms in the loss function causes stochastic
gradient descent to experience different variances at different minima. In order to mitigate
this effect and perform faithful optimization, we propose a combined resampling-reweighting scheme
to balance the variance at local minima and extend to general loss functions. We explain from the numerical stability perspective how the proposed scheme is more likely to select the true global minimum, and the local convergence analysis perspective how it converges to a minimum faster when compared with the vanilla stochastic gradient descent. Experiments from robust
statistics and computational chemistry are provided to demonstrate the
theoretical findings.

\end{abstract}

\begin{keywords}
  resampling, reweighting, stochastic asymptotics, non-convex optimization, stability
\end{keywords}

\begin{AMS}
  	82M60, 93E15
\end{AMS}

\section{Introduction}
This paper is concerned with optimizing a non-convex smooth loss function. Identifying a global
minimum is known to be computationally hard, especially in the high-dimensional setting.  One
possible approach, originated from early work in statistical mechanics and Monte Carlo methods, is
to turn this into the task of sampling approximately the Gibbs distribution associated with the loss
function at a sufficiently low temperature. The rationale is that the samples from the Gibbs
distribution have a good chance of being near the global minimum.

In modern machine learning, the loss function often takes the form of an empirical sum of
individual terms from finitely many sampled data points. Due to the large size of the dataset,
efficient optimization methods such as the stochastic gradient descent (SGD) are commonly used. For
non-convex loss function, an increasingly more popular viewpoint is to consider SGD as a sampling
algorithm.



One important feature of stochastic gradient-type algorithms is that the noise drives SGD to escape
from sharp minima quickly and hence SGD prefers flat minima \cite{zhu2018anisotropic,
  xie2020diffusion, hoffer2017train,zhou2020towards, simsekli2019tail}. Such a bias towards flat
minima leads to better generalization properties for problems such as deep learning. However, when
the ultimate goal is to identify the global minimum and the landscape around the global minimum
happens to be sharper compared to the non-global local minima, this bias is often not desired as SGD
often misses the global minimum in a sharp valley.

For many data science and physical science problems, the ultimate goal is to find the global
minimum for a non-convex landscape, independent of whether it is sharp or flat. In data sciences, one
example is the handling of contaminated data, where a simple approach is to use non-convex loss
functions in robust statistics \cite{ma2020normalized, rousseeuw1984least,jain2017non,
  barron2019general}. However, as shown later in a motivating example, if we naively apply the vanilla
stochastic gradient over a dataset comprised of two subgroups, where one has much
larger features (more sensitive) compared to the other (less sensitive), the resulted optimal
parameter will be biased towards the less sensitive subgroup. This scenario is not rare in real
applications: for example, when researchers adjust a medicine's ingredients by evaluating the tested
group's responses, inherently different hormone levels in individuals can affect the faithfulness of
the evaluation. In physical sciences, examples of non-convex global minimization include finding the
ground state wave function in quantum many body problems \cite{kochkov2018variational}, geometry
optimization of the potential energy surface of a molecule in computational chemistry
\cite{leach2001molecular}, and etc. Though applying vanilla stochastic gradient can reduce
computational cost for these large scale problems, one also takes the risk of missing the global
minimum.

\subsection{Main contributions} Below we summarize the main contributions of this paper.
\begin{enumerate}
\item Starting from a motivating example, we identify the fundamental reason behind the
  selection bias is that the difference in the Lipschitz constants of multiple terms in the loss
  function causes stochastic gradient descent to experience different variances at different local
  minima.
\item To mitigate this selection bias, we propose a combined resampling-reweighting strategy for faithful
  minimum selection. We also derive stochastic differential equation (SDE) models to shed lights on how the proposed strategy balances variances in different regions. This proposed strategy
  also recovers the importance sampling SGD for faster training (for example, \cite{zhao2015stochastic,katharopoulos2018not}) from a different perspective.
  \item We provide a stability analysis and a local convergence rate for the importance sampling SGD in non-convex optimization problems. Furthermore, in the quantitative results we show how the combined resampling-reweighting strategy improves stability and convergence.
\item We show also empirically that the proposed strategy outperforms SGD with examples from robust statistics and computational chemistry.
\end{enumerate}


\subsection{Related work} 
Our proposed combined resampling-reweighting strategy can be viewed as a form of importance
sampling. This line of works can be traced back to the randomized Kaczmarz method
\cite{strohmer2009randomized} that selects rows with probability proportional to their squared
norms. Later, \cite{needell2014stochastic} connects the randomized Kaczmarz method with a SGD
algorithm with importance sampling. In convex optimization, many works
\cite{needell2014stochastic,zhao2015stochastic, canevet2016importance} show that the importance
sampling among stochastic gradients can improve the convergence speed. Since importance sampling
reduces the stochastic gradient's variance, this method and its variants can also accelerate the
neural networks training
\cite{alain2015variance,johnson2018training,loshchilov2015online,katharopoulos2018not}. However, it
has not been studied yet that how the importance sampling impacts the minimum selection in
learning non-convex problems.

Our approach to understanding the dynamics of the combined resampling-reweighting strategy is based on the numerical analysis for stochastic systems. We mention that the dynamical stability perspective has been used in for example \cite{wu2018sgd, ma2021sobolev}. Studying convergence rates for stochastic gradient algorithms for non-convex loss functions is also rapidly growing in recent years \cite{fehrman2020convergence, mertikopoulos2020almost, wojtowytsch2021stochastic}. On the other hand, taking the continuous-time limit and using SDEs to analyze stochastic algorithms have become popular
especially for stochastic non-convex problems. Using the developed stochastic analysis can lead to
numerous new insights of the non-convex optimization \cite{li2017stochastic,chaudhari2018stochastic,
  mandt2016variational, cheng2020stochastic}. Here we take the SDE approximation approach as it
gives us a clearer picture of the global minimum selection.




\section{Main idea from a motivating example}\label{sec:main_idea}

Given a dataset consisting of $n$ samples $\{x_i\}_{i=1}^n$, we consider an optimization problem
$$\min_\theta \frac{1}{n} \sum_{i=1}^n V(x_i,\theta).$$
The samples are  assumed to come from $m$
different subgroups, each representing a proportion $a_j\in(0,1)$ of the overall population, i.e.,
$\sum_{j=1}^m a_j = 1$. Assuming for simplicity that the loss term $V(x_i,\theta)$ only depends on the subgroup index
of $x_i$, i.e., $V(x_i,\theta) = V_j(\theta)$ if $x_i$ is from subgroup $j$, the optimization
problem can be simplified as 
\begin{equation}\label{opt:min}
  \theta^* = \argmin_{\theta} V(\theta), \quad V(\theta) \equiv \sum_{j=1}^m a_j V_j(\theta),
\end{equation}
in the large $n$ limit. If the terms $V_j(\theta)$ are non-convex loss functions, the overall loss
function $V(\theta)$ is in general non-convex as well. From the next example, we will show that
applying the vanilla SGD to solve (\ref{opt:min}) becomes problematic when $V_j$'s exhibit drastically
different Lipschitz constants.

\subsection{An illustrative example.}\label{sec:ex}

Consider the case of two subgroups with the following loss functions,
\[
V_1(\theta)=\begin{cases}
|\theta+1|-1, & \theta\le 0\\
\eps \theta, & \theta>0
\end{cases},
\quad
V_2(\theta)=\begin{cases}
-\eps \theta, & \theta\le 0\\
|K\theta-1|-1, & \theta>0
\end{cases}
\]
with $\eps\in(0,1)$ small, $K>1$, and $a_2>a_1$. The total loss function is $V(\theta) = a_1 V_1(\theta) + a_2
V_2(\theta)$ with local minima $\theta=-1$ and $\theta=1/K$. By construction, the loss function
has a sharp global minimum at $\theta=1/K$ and a flat local minimum at $\theta=-1$ (see for example 
Fig \ref{fig:small} (1)).
\footnote{It is necessary to have $O(\eps)$ terms in the loss function for SGD to work. Without the
  $O(\eps)$ terms, if the SGD starts in $(-\infty, 0)$ it will stay in this region because there is
  no drift from $V_2(x)$. Similarly, if the SGD starts in $(0,\infty)$, it will stay in this
  region. That means the result of SGD only depends on the initialization when $O(\eps)$ term is
  missing.}



The following lemma states that the optimization trajectory of the vanilla SGD is biased towards one
of the local minima, in the small learning rate $\eta$ limit.
\begin{lemma}\label{lem:1}
  When $\eta$ is sufficiently small, the equilibrium distribution of the vanilla SGD is given given
  by
\[
p(\theta) \sim
\begin{cases}
  \exp\left(-\frac{2}{a_1a_2\eta} V(\theta)\right) & \text{for}~\theta<0,\\
  \frac{1}{K^2} \exp\left(-\frac{2}{K^2a_1a_2\eta} V(\theta)\right) & \text{for}~\theta>0,
\end{cases}
\]
up to a normalizing constant.
\end{lemma}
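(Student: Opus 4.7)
The plan is to derive the stationary distribution via a continuous-time Fokker--Planck analysis of the SGD iteration $\theta_{k+1} = \theta_k - \eta g(\theta_k)$, where at each step $g$ equals $V_1'(\theta_k)$ with probability $a_1$ and $V_2'(\theta_k)$ with probability $a_2$. The first step is to compute the mean and variance of the stochastic gradient at a generic $\theta$: the mean is exactly $V'(\theta) = a_1 V_1'(\theta) + a_2 V_2'(\theta)$, and an elementary expansion gives $\mathrm{Var}(g(\theta)) = a_1 a_2 (V_1'(\theta) - V_2'(\theta))^2$. Plugging in the piecewise definitions, away from the kinks one finds this variance is $a_1 a_2 (1+\veps)^2 \approx a_1 a_2$ for $\theta<0$ and $a_1 a_2 (K+\veps)^2 \approx a_1 a_2 K^2$ for $\theta>0$; the two orders of magnitude are the reason the two minima see different effective noise.

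Next I would pass to the small-$\eta$ SDE approximation
\[
d\theta_t = -V'(\theta_t)\,dt + \sqrt{\eta\, a_1 a_2}\,|V_1'(\theta_t)-V_2'(\theta_t)|\,dW_t,
\]
justified by a standard weak-convergence / modified-equation argument after rescaling time by $\eta$. Denote the diffusion coefficient by $\sigma^2(\theta) = \eta a_1 a_2 (V_1'-V_2')^2$. On each of the two half-lines $\sigma^2$ is essentially constant: $\sigma^2 \approx \eta a_1 a_2$ on $\theta<0$ and $\sigma^2 \approx \eta a_1 a_2 K^2$ on $\theta>0$.

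The third step is to solve the stationary Fokker--Planck equation $\partial_\theta[V'(\theta) p] + \tfrac12 \partial_\theta^2[\sigma^2(\theta) p] = 0$. Setting the probability flux to zero and integrating yields
\[
p(\theta) \;\propto\; \frac{1}{\sigma^2(\theta)} \exp\!\left(-\int^\theta \frac{2 V'(s)}{\sigma^2(s)}\,ds\right).
\]
Restricting to each half-line (where $\sigma^2$ is constant) and absorbing the integration constants into the $V$-independent prefactor immediately gives the two branches claimed in the lemma: $\exp(-2V/(a_1 a_2 \eta))$ for $\theta<0$, and $\tfrac{1}{K^2}\exp(-2V/(K^2 a_1 a_2 \eta))$ for $\theta>0$, where the $1/K^2$ comes exactly from the $1/\sigma^2$ prefactor.

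I expect the main obstacle to be the matching of the two branches at $\theta=0$, where $\sigma^2$ jumps. The zero-flux condition has to be imposed as a limit from each side, and the constants of integration on the two branches have to be chosen so that the total probability current vanishes across $\theta=0$; this constrains the overall normalization rather than the shape of each branch, which is why the statement is written ``up to a normalizing constant.'' A secondary subtlety is that $V_1, V_2$ are not $C^1$, so the SDE approximation and the Fokker--Planck argument need to be justified by smoothing $|\cdot|$ slightly or by working with the piecewise-linear generator directly; this is a routine but necessary technical point.
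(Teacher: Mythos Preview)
Your approach is correct and essentially identical to the paper's: both compute the mean $V'$ and variance $a_1 a_2 (V_1'-V_2')^2$ of the stochastic gradient, pass (after $\eps\to 0$) to the Langevin SDE with piecewise-constant diffusion, and read off the stationary density from the Fokker--Planck equation. The only cosmetic difference is in the matching step: the paper writes the Gibbs density on each half-line separately and then imposes continuity of $\sigma^2 p$ at $\theta=0$ (using $V(0)=0$, this yields $Z_2=K^2 Z_1$), whereas you obtain the $1/K^2$ directly from the $1/\sigma^2$ prefactor in the global zero-flux formula $p\propto \sigma^{-2}\exp\!\bigl(-\int 2V'/\sigma^2\bigr)$; these are the same computation, and the ``obstacle'' you anticipate at $\theta=0$ is precisely the continuity of $\sigma^2 p$ that the paper invokes.
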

The derivation follows from approximating the SGD updates by an SDE with a numerical error of order $O(\sqrt{\eta})$ in the weak sense. Because the loss function
considered is piece-wise linear, the approximate SDEs are of Langevin dynamics form with piecewise
constant noise coefficients. In particular, when the dynamics reaches equilibrium, the stationary
distribution of the stochastic process can be approximated by a Gibbs distribution. The detailed
computations of Lemma \ref{lem:1} is given in the Appendix \ref{sec:appendixA}.

From Lemma \ref{lem:1}, we can make the following surprising observation: {\em When $K\gg 1$, even
  though $\theta=1/K$ is the global minimum, the SGD trajectory spends most of the time near the
  non-global local minimum $\theta = -1$.} This is illustrated in Fig \ref{fig:small} (2) and the
fundamental reason is that the Lipschitz constant of the individual loss term affects the SGD
variance at individual local minimum, thus resulting in an undesired equilibrium distribution.

\begin{figure}[h!]
  \centering
  \begin{tabular}{ccc}
  \includegraphics[scale=0.22]{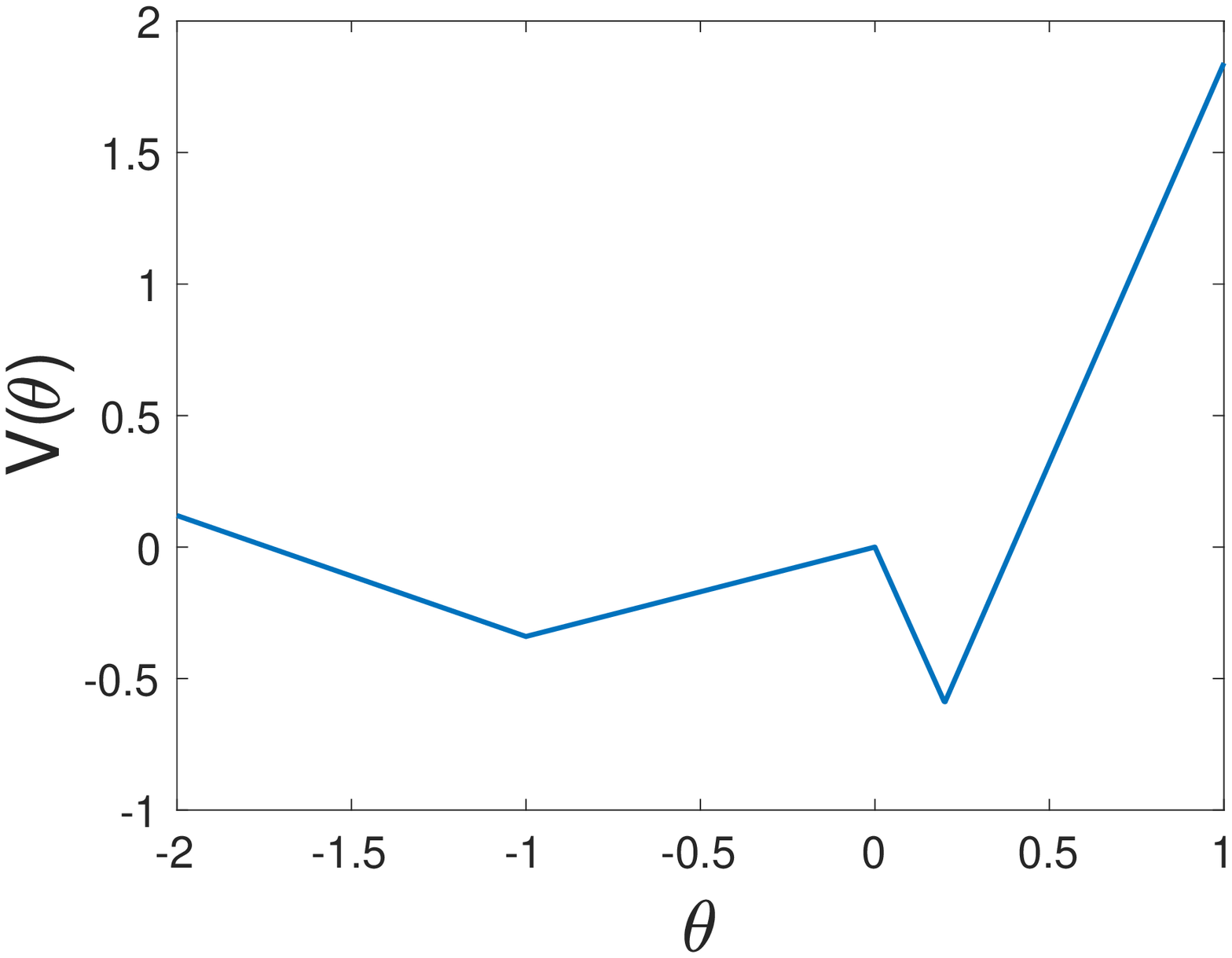}&
    \includegraphics[scale=0.22]{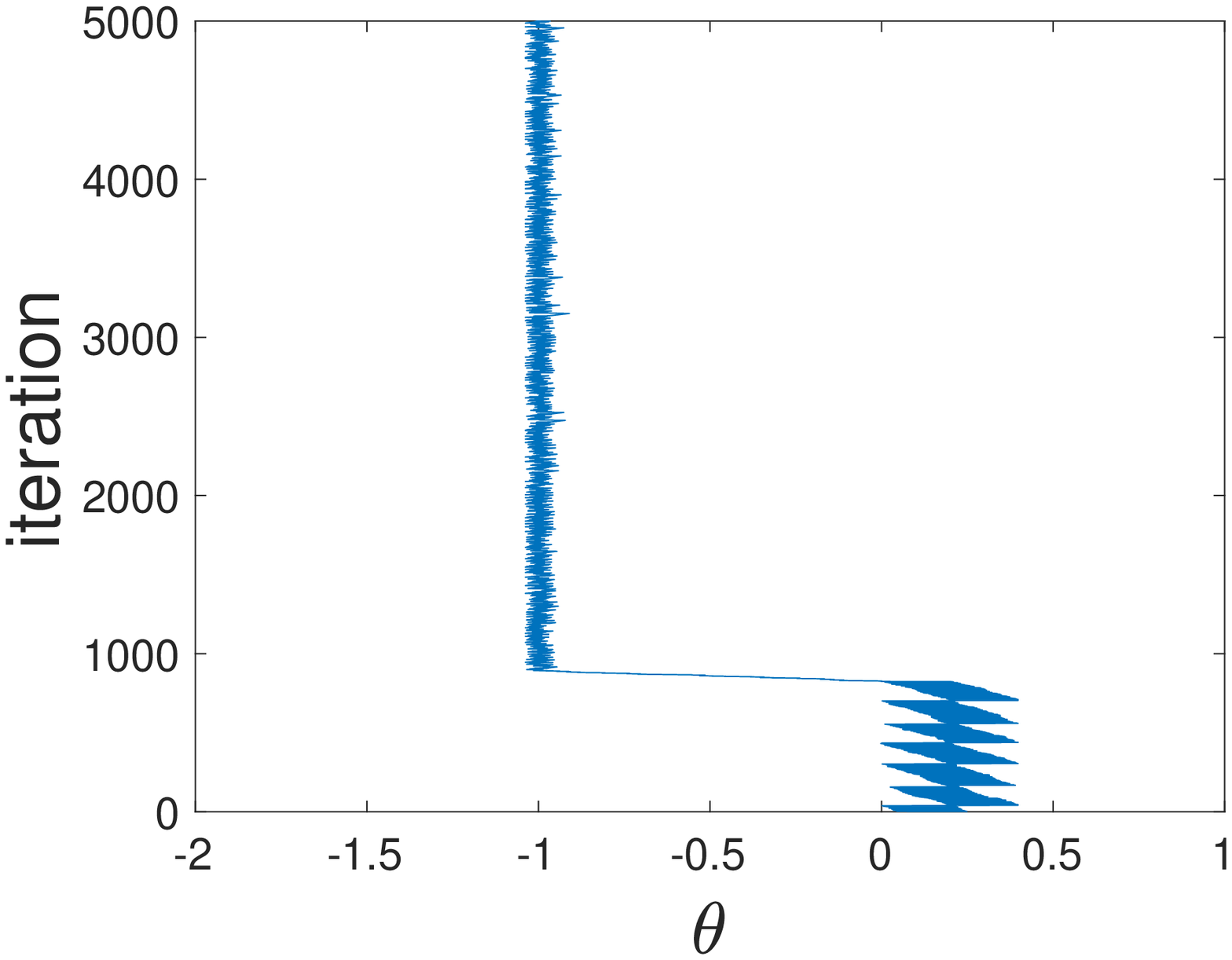} &
    \includegraphics[scale=0.22]{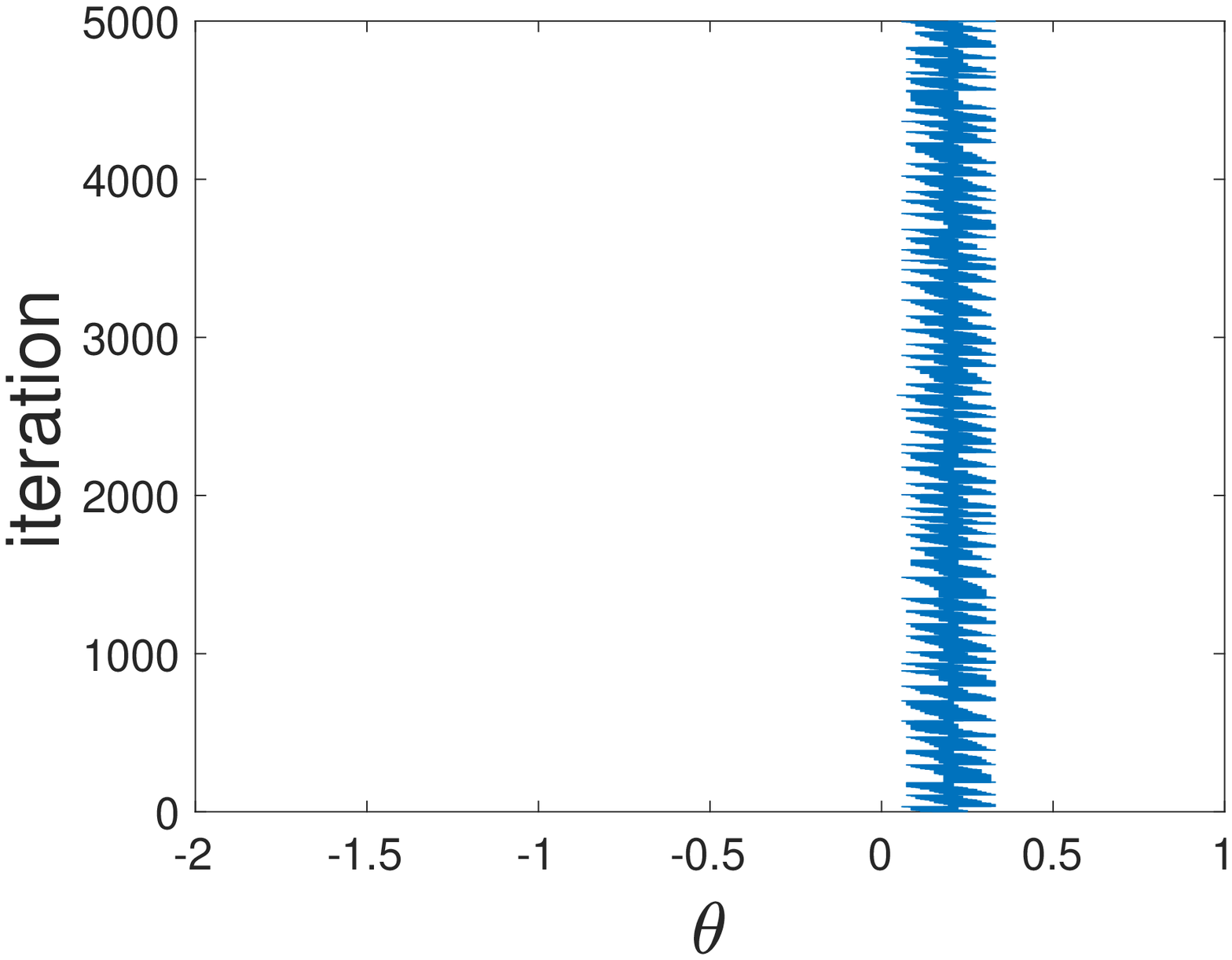}\\
     (1) Loss landscape &(2) SGD & (3) RR
  \end{tabular}
  \caption{We set $a_1 = 0.4$, $a_2 = 0.6$, $\eps = 0.1$, $K=5$ so that the global minimum is at
    $\theta=1/5$. For both the vanilla SGD and the resampling-reweighting (RR) scheme, we 
    start from $\theta_0 = 0.25$ and use a fixed step size
    $\eta=0.04$. We can see that the vanilla SGD jumps to the non-global local
    minimum $\theta=-1$ after several iterations while the RR method stays around the global
    minimum $\theta=1/K$ all the time. We include more comparisons with various learning rates in the Appendix \ref{sec:appendixC} to show that the RR scheme is stable for a wider range of $\eta$.}
  \label{fig:small}
\end{figure}

To fix this issue, we propose to {\em resample} two subgroups with proportion $f_1$ and $f_2$,
respectively (with $f_1, f_2>0, f_1+f_2=1$). In order to maintain the same overall loss function, we
also need to {\em reweight} each loss term with weights $w_1:=a_1/f_1, w_2:=a_2/f_2>0$.  The values
of $f_1,f_2,w_1$, and $w_2$ are to be determined depending on $a_1,a_2$ and $K$. After reweighting
and resampling, the loss function can be reformulated equivalently as
\begin{align}\label{eqn:reform}
  V(\theta) = f_1\cdot \Big( \frac{a_1}{f_1}V_1(\theta)\Big) + f_2\cdot \Big( \frac{a_2}{f_2}V_2(\theta)\Big).
\end{align}
In each iteration, a data point is sampled from the two subgroups following
proportion $f_1$ and $f_2$, and then either $\frac{a_1}{f_1}V_1(\theta)$ or $\frac{a_2}{f_2}V_2(\theta)$
is used for computing the stochastic gradient. In what follows, we refer to this approach as the
{\em resampling-reweighting} (RR) scheme.

Although under the expectation of the stochastic gradients in \eqref{eqn:reform} remains the same by
design, the variance experienced in different regions now can be balanced by the parameters
$f_1,f_2$. A direct computation shows that in the four regions $(-\infty, -1)$, $(-1,0)$, $(0,1/K)$,
and $(1/K, \infty)$:
\vspace{10px}
\begin{itemize}[leftmargin=*]
\item with probability $f_1$, the gradients are $-a_1/f_1$, $a_1/f_1$, $\eps a_1/f_1$, $\eps a_1/f_1$ respectively;
\item with probability $f_2$, the gradients are $-\eps a_2/f_2$, $-\eps a_2/f_2$, $-K a_2/f_2$, $K a_2/f_2$ respectively;
\item the variances of the gradients are equal to $\left(a_1 \sqrt{\frac{f_2}{f_1}} -\eps a_2
  \sqrt{\frac{f_1}{f_2}}\right)^2$, $\left(a_1 \sqrt{\frac{f_2}{f_1}} +\eps a_2
  \sqrt{\frac{f_1}{f_2}}\right)^2$, $\left(\eps a_1 \sqrt{\frac{f_2}{f_1}} +K a_2
  \sqrt{\frac{f_1}{f_2}}\right)^2$, $\left(\eps a_1 \sqrt{\frac{f_2}{f_1}} -K a_2
  \sqrt{\frac{f_1}{f_2}}\right)^2$, respectively.
\end{itemize}

\vspace{10px}
The dynamics of RR becomes straightforward if we view it as a numerical approximation of SDEs: with
a sufficiently small step size $\eta>0$, taking $\eps \to 0$, it is given by
\begin{equation*}
  \begin{aligned}
    &d\Theta_t = -V'(\Theta_t) dt + a_1\sqrt{f_2/f_1}\sqrt{\eta} dW_t,~~   &\text{in the region}~ \theta<0,\\
    &d\Theta_t = -V'(\Theta_t) dt + K a_2\sqrt{f_1/f_2} \sqrt{\eta}dW_t,~~ &\text{in the region}~ \theta>0.
  \end{aligned}
\end{equation*}
In order to balance the variance at the two local minima $\theta=-1$ and $\theta=1/K$, we impose
\begin{align*}
  a_1 \sqrt{f_2/f_1} = K a_2 \sqrt{f_1/f_2} ~~\Longrightarrow~~~
  f_1=\frac{a_1}{a_1+K a_2},~~f_2 = 1-f_1= \frac{K a_2}{a_1+K a_2}.
\end{align*}
As a result, the assigned weights for the two subgroups are
\begin{align*}
  w_1 := \frac{a_1}{f_1} = \frac{a_1+Ka_2}{1}, \quad w_2 := \frac{a_2}{f_2} = \frac{a_1+Ka_2}{K}.
\end{align*}
The above computation
suggests that in order to fix the selection bias, each subgroup should be reweighted by the
reciprocal of its Lipschitz constant. We then undersample from subgroup 1 and oversample
from subgroup 2 so that their sample size ratio approaches $Ka_2/a_1$. In numerical tests,
comparing Fig \ref{fig:small} (2) and (3), we can see that the RR scheme adjusts the dynamics of the
stochastic optimization trajectory to stay around the global minimum.

\begin{remark}
The motivating example considers different slopes $1$ and $K$, which can reflect the feature magnitude disparities in data science. Suppose a dataset containing $n$ samples $\{x_i, y_i\}_{i=1}^n$, $x_i\in\R^d, y_i\in\R$, and $a_1$-proportion of the data features has magnitude $\Vert x_i\Vert_2\sim K$, while the rest has magnitude $\Vert x_i\Vert_2\sim 1$, then an optimization problem of the form
\begin{align}
    \min_{\theta\in\R^d} \frac{1}{2n}\sum_{i=1}^n (f(\theta\cdot x_i)-y_i)^2
\end{align}
can be thought of a more complex model of our motivating example.
\end{remark}

\section{Analysis of the general case}\label{sec:method}
In this section, we consider the general empirical loss for $\theta\in\R^d, d\geq 1$.
\begin{align}\label{model3}
  L(\theta) = \frac{1}{n}\sum_{i=1}^n l_i(\theta).
\end{align}

\subsection{The general RR scheme}
Motivated by the illustrative example in Section \ref{sec:main_idea}, we propose the following resampling-reweighting scheme 
in $\R^d,~d\geq 1$: at each $k$-th iteration with current parameter $\theta_k$, for each 
term $i=1,2,\cdots n$,
\begin{enumerate}
\item reweight the $i$-th term $l_i(\theta_k)$ with a weight proportional to $1/\Vert\nabla
  l_i(\theta_k)\Vert_2$,
\item set the resampling probability for the (reweighted) $i$-th term to $\Vert \nabla
  l_i(\theta_k)\Vert_2$.
\end{enumerate}


The resampling proportions and weights are decided by comparing the RR scheme with SGD in
the continuous-time limit. More specifically, since the proposed RR scheme is designed for variance-balancing, in the limit it should share the same drift with the vanilla SGD but experience more balanced noises across different local minima. We lay out computation details in the next sub-section.

\subsubsection{RR scheme derivations} 
Recall for the vanilla SGD, the update rule with time step size $\eta>0$ is given by
\begin{align}\label{SGD}
  \theta_{k+1} = \theta_k - \eta \nabla l_j(\theta_k),
\end{align}
where the index $j$ is chosen from $1$ to $n$ with uniform probability $1/n$. Note that
\begin{align}\label{mean1}
  m(\theta_k) = \E_{p_1}[\nabla l_i(\theta_k)] = \frac{1}{n}\sum_{i=1}^n  \nabla l_i(\theta_k) = \nabla L(\theta_k),
\end{align}
and we can rewrite (\ref{SGD}) as
\begin{align*}
  \theta_{k+1} = \theta_k - \eta m(\theta_k) + \sqrt{\eta}V^1(\theta_k),\quad
  \text{with}~~V^1(\theta_k) = \sqrt{\eta}\left(m(\theta_k)- \nabla l_j(\theta_k)\right).
\end{align*}
By taking the simplifying assumption that the gradient noise is Gaussian\footnote{We are aware that
  this approximation might not be valid in many situations. However, this assumption streamlines the
  SDE analysis.}, the dynamics can be approximated by
\begin{align}\label{sme1}
  d\Theta_t = -m(\Theta_t)dt + \sqrt{\eta} \sigma^1(\Theta_t)dW_t,
\end{align}
where $\Sigma^1(\Theta_t ):= \sigma^1(\Theta_t )\sigma^1(\Theta_t )^{\top}$ is given by
$\Sigma^1(\Theta_t ) = \frac{1}{n} \sum_{i=1}^n \nabla l_i(\Theta_t ) \nabla
l_i(\Theta_t)^{\top}-m(\Theta_t ) ^{\otimes 2}$.

On the other hand, as indicated in the illustrative example, the RR scheme with the same time step size $\eta>0$ should be given by 
\begin{align}\label{IpSamp}
  \theta_{k+1} = \theta_k - \eta \frac{C(\theta_k) }{ \Vert \nabla l_j(\theta_k)\Vert_2}\nabla l_j(\theta_k).
\end{align}
Here we set $C(\theta_k) = \frac{1}{n}\sum_{i=1}^n \Vert\nabla l_i(\theta_k)\Vert_2$, and the index $j$ is chosen from $1$ to $n$ with probability $\Vert\nabla
l_i(\theta_k)\Vert_2/Z(\theta_k)$, with the normalizing factor $Z(\theta_k) = \sum_{i=1}^n
\Vert\nabla l_i(\theta_k)\Vert_2$, so that the mean for this approach matches with the one for SGD
(\ref{mean1}),
\begin{align*}
  \sum_{i=1}^n \frac{C(\theta_k) }{ \Vert \nabla l_i(\theta_k)\Vert_2}\nabla l_i(\theta_k)
  \frac{\Vert\nabla l_i(\theta_k)\Vert_2}{Z(\theta_k)}= \frac{1}{n}\sum_{i=1}^n \nabla l_i(\theta_k)
  = m(\theta_k).
\end{align*}
Now we rewrite (\ref{IpSamp}) in the form
\begin{align*}
  \theta_{k+1} = \theta_k - \eta m(\theta_k) + \sqrt{\eta}V^2(\theta_k), \quad \text{with}~~
  V^2(\theta_k) = \sqrt{\eta}\left(m(\theta_k)- \frac{C(\theta_k) }{ \Vert\nabla
    l_j(\theta_k)\Vert_2}\nabla l_j(\theta_k)\right).
\end{align*}
Thus, the resulted dynamics can be approximated as
\begin{align}\label{sme2}
    d\Theta_t = -m(\Theta_t)dt + \sqrt{\eta} \sigma^2(\Theta_t)dW_t,
\end{align}
where $\Sigma^2(\Theta_t )= \sigma^2(\Theta_t )\sigma^2(\Theta_t )^{\top}$ is given by
$\Sigma^2(\Theta_t ) = \frac{Z(\Theta_t)}{n^2}\sum_{i=1}^n \frac{\nabla l_i(\Theta_t ) \nabla
  l_i(\Theta_t ) ^{\top}}{\Vert\nabla l_i(\Theta_t)\Vert_2} - m(\Theta_t )^{\otimes 2}$.

Note that (\ref{sme1}) and (\ref{sme2}) are derived in the same time scale and have the same drift, so that the comparison of those two reduces to comparing covariance matrices $\Sigma^1, \Sigma^2$.

\begin{remark}
The RR scheme (\ref{IpSamp}) from the derivation turns out to be similar to
\cite{zhao2015stochastic} for faster convergence of regularized convex minimization problems,
\cite{katharopoulos2018not} for training deep learning with importance sampling, and even
\cite{strohmer2009randomized} for solving linear systems of equations earlier on.

In terms of computational complexity, recomputing the weights and sampling proportions in each iteration is obviously expensive. In practice, as discussed in \cite{needell2014stochastic}, one can use a rejection sampling scheme to select stochastic gradients from a weighted distribution. Algorithm 1 in \cite{katharopoulos2018not} presents a practical guidance on how to speedup the importance sampling SGD/RR scheme in deep learning, by only updating parameters by (\ref{IpSamp}) when the variance of gradients can be reduced.
\end{remark}

\subsection{Stability analysis}
Let us step back to discrete time stochastic algorithms, and compare SGD and RR scheme from the numerical analysis perspective. In fact, especially for stiff problems, the RR scheme allows a wider range of step-sizes to keep stochastic linear stability compared to SGD. We consider minimizing the empirical loss function (\ref{model3}) of the form
\begin{equation}
    L(\theta) = \frac{1}{n}\sum_{i=1}^n l_i(\theta)=\frac{1}{2n}\sum_{i=1}^n \left(f(x_i,\theta)-y_i\right)^2,
\end{equation}
where $f$ is the learning model and $\{(x_i,y_i)\}_{i=1}^n, x_i\in\R^d, y_i\in\R$ are i.i.d sampled data points. We assume that for the model $f$, there is an interpolation solution $\theta^*$ such that
\begin{align*}
    y_i = f(x_i,\theta^*),\quad \forall~1\leq i\leq n.
\end{align*}
By definition, a stationary point $\tilde \theta$ is \textit{stochastically stable} if there exists a uniform constant $0 < C \leq 1$ such that 
$\E[\lVert \theta_k - \tilde \theta \rVert_2^2] \leq C \lVert \theta_0 -
\tilde \theta \rVert_2^2, k\geq 1$, where $\theta_k$ is the $k$-th iterate of the stochastic algorithm. Suppose $\{\theta_k\}$ are sufficiently close to the interpolation solution $\theta^*$, then SGD iteration can be written as
\begin{equation}\label{stability1}
\begin{aligned}
    \theta_{k+1}&=\theta_k -\eta (f(x_i,\theta_k)-y_i)\nabla_{\theta} f(x_i,\theta_k)\\
    &\approx \theta_k -\eta \nabla_{\theta} f(x_i, \theta^*)\nabla_{\theta} f(x_i, \theta^*)^{\top} (\theta_k-\theta^*),
\end{aligned}    
\end{equation}
where we take a Taylor expansion approximation $f(x_i,\theta_k)-y_i =f(x_i,\theta_k)-f(x_i,\theta^*) \approx \nabla_{\theta}f(x_i, \theta^*)^{\top} (\theta_k-\theta^*)$ and also take $\nabla_{\theta}f(x_i, \theta_k)\approx \nabla_{\theta}f(x_i, \theta^*)$ approximately. Let us denote
\begin{equation}
    H_i := \nabla_{\theta} f(x_i, \theta^*)\nabla_{\theta} f(x_i, \theta^*)^{\top}\quad \text{and }\quad H:= \frac{1}{n}\sum_{i=1}^n H_i,
\end{equation}
then we have the following stability conditions for SGD and RR scheme.
\begin{lemma}\label{lem:stability}
Suppose that the starting point $\theta_0$ is close to the interpolation solution $\theta^*$, that is, there exists small $\eps>0$ such that $\lVert \theta_0-\theta^*\rVert_2\leq \eps$ , and $\nabla_{\theta} f(x_i, \theta)$ is bounded and continuous around $\theta^*$ for all $1\leq i\leq n$. Then the condition for the SGD to be stochastically stable around  $\theta^*$ is
\begin{equation}\label{cond1}
    \lambda_{\text{max}}\left\{(I-\eta H)^2+ \eta^2\left(\frac{1}{n}\sum_{i=1}^n H_i^2 - H^2\right)+O(\eps)\right\}\leq 1,
\end{equation}
and the condition for the RR scheme to be stochastically stable around  $\theta^*$ is, for each $k\geq 0$,
\begin{equation}\label{cond2}
    \lambda_{\text{max}}\left\{(I-\eta H)^2+ \eta^2\left(\frac{1}{n}\sum_{i=1}^n\tilde G_{i,k} H_i^2- H^2\right)+O(\eps)\right\}\leq 1,
\end{equation}
where $\tilde G_{i,k}:=\frac{\frac{1}{n}\sum_{j=1}^n\lVert \nabla_{\theta} f(x_j, \theta^*)\nabla_{\theta} f(x_j, \theta^*)^{\top}(\theta_k-\theta^*)\rVert_2}{\lVert \nabla_{\theta} f(x_i, \theta^*)\nabla_{\theta} f(x_i, \theta^*)^{\top}(\theta_k-\theta^*)\rVert_2}$.
\end{lemma}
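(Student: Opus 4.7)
The plan is to derive both stability conditions from a single template: linearize the update around $\theta^*$, evaluate the conditional second moment $\E[\lVert\theta_{k+1}-\theta^*\rVert_2^2\mid\theta_k]$, and impose that the resulting symmetric matrix have spectral radius at most one so that the contraction $\E\lVert\theta_k-\theta^*\rVert_2^2\leq \lVert\theta_0-\theta^*\rVert_2^2$ propagates uniformly in $k$. Under the interpolation hypothesis $y_i=f(x_i,\theta^*)$ together with continuity and boundedness of $\nabla_\theta f(x_i,\cdot)$ near $\theta^*$, a Taylor expansion of $\nabla l_i(\theta_k)=(f(x_i,\theta_k)-y_i)\nabla_\theta f(x_i,\theta_k)$ gives
\begin{equation*}
\nabla l_i(\theta_k) = H_i(\theta_k-\theta^*) + r_i(\theta_k),\qquad \lVert r_i(\theta_k)\rVert_2=O(\lVert\theta_k-\theta^*\rVert_2^2),
\end{equation*}
uniformly in $i$. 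Writing $u_k:=\theta_k-\theta^*$, as long as the iterates remain in the $\eps$-neighborhood of $\theta^*$ (consistent with the stability conclusion), $r_i$ contributes at most an $O(\eps)$ relative perturbation to the linear map $u_k\mapsto u_{k+1}$.

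For the vanilla SGD, substituting the linearization into (\ref{stability1}) yields $u_{k+1}=(I-\eta H_j)u_k+O(\eta\eps\lVert u_k\rVert_2)$ with $j$ uniform on $\{1,\ldots,n\}$. Computing $\E_j[u_{k+1}u_{k+1}^\top]$ by squaring $I-\eta H_j$ and averaging over $j$ produces $\tfrac{1}{n}\sum_i(I-\eta H_i)^2 = I-2\eta H+\eta^2\tfrac{1}{n}\sum_i H_i^2$. Using $(I-\eta H)^2 = I-2\eta H+\eta^2 H^2$ rewrites this as $(I-\eta H)^2+\eta^2\bigl(\tfrac{1}{n}\sum_i H_i^2-H^2\bigr)$, and demanding that its largest eigenvalue (after absorbing the $O(\eps)$ remainder) be at most one gives exactly condition (\ref{cond1}).

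For the RR scheme, the same substitution produces $u_{k+1}=\bigl(I-\eta C(\theta_k)H_j/\lVert H_ju_k\rVert_2\bigr)u_k+O(\eta\eps\lVert u_k\rVert_2)$ where $j$ is drawn with probability $\lVert H_ju_k\rVert_2/Z(\theta_k)$ and $Z(\theta_k)=nC(\theta_k)$. As in the derivation of (\ref{IpSamp}), the sampling probability cancels the reweighting factor in the linear contribution, so that the drift term collapses to $-2\eta H$ as in SGD. In the quadratic term, the sampling weight $\lVert H_ju_k\rVert_2/(nC(\theta_k))$ applied to $C(\theta_k)^2 H_j^2/\lVert H_ju_k\rVert_2^2$ telescopes into $C(\theta_k)H_j^2/(n\lVert H_ju_k\rVert_2)=\tfrac{1}{n}\tilde G_{j,k}H_j^2$, producing the matrix $(I-\eta H)^2+\eta^2\bigl(\tfrac{1}{n}\sum_j\tilde G_{j,k}H_j^2-H^2\bigr)$. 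Requiring its spectral radius to be at most one (modulo the $O(\eps)$ term) is condition (\ref{cond2}), and the dependence of $\tilde G_{j,k}$ on $\theta_k$ accounts for the phrase \emph{for each $k\geq 0$}.

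The main obstacle will be making the $O(\eps)$ remainder in the RR case rigorous: the factor $1/\lVert H_ju_k\rVert_2$ can blow up when $H_ju_k$ is small, so naively multiplying by the Taylor remainder $r_j=O(\lVert u_k\rVert_2^2)$ is not manifestly an $O(\eps)$ perturbation. The key observation is that the potentially singular index $j$ is drawn with probability proportional to $\lVert H_ju_k\rVert_2$, which vanishes at the same rate, so under expectation the singularity cancels and the contribution remains $O(\eps)$; a careful bookkeeping of this cancellation, together with the mild genericity assumption that $H_ju_k\neq 0$ for the relevant indices, is the only nontrivial step beyond the algebra above.
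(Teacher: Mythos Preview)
Your proposal is correct and follows essentially the same route as the paper: linearize the update around $\theta^*$ using the interpolation hypothesis and Taylor expansion, compute the conditional second moment $\E[\lVert\theta_{k+1}-\theta^*\rVert_2^2\mid\theta_k]$ as a quadratic form, and read off the stability condition from the largest eigenvalue of the resulting symmetric matrix, treating SGD and RR identically except for the sampling measure. Your discussion of the $1/\lVert H_j u_k\rVert_2$ singularity and its cancellation against the sampling weight is in fact more careful than the paper's own proof, which simply writes $G(x_i,\theta_k)=\tilde G_{i,k}+O(\eps)$ and proceeds formally without addressing that point.
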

\begin{proof}
The first part (\ref{cond1}) closely follows from \cite{wu2018sgd}. Indeed, the iteration (\ref{stability1}) can be rewritten more precisely as
\begin{equation}
    \theta_{k+1}-\theta^* = \left(I - \eta H_i+O(\eps)\right)(\theta_k-\theta^*).
\end{equation}
Conditioned on $\{\Vert\theta_k-\theta^*\Vert_2\leq \eps\}$, we have the expectation on the second moment as
\begin{equation}\label{compute_cond1}
    \begin{aligned}
        \E_{\mathcal{D}}[\lVert\theta_{k+1}-\theta^*\rVert_2^2] &= \E_{\mathcal{D}}\left[(\theta_k-\theta^*)^{\top}\left(I - \eta H_i+O(\eps)\right)^2(\theta_k-\theta^*)\right]\\
        & = (\theta_k-\theta^*)^{\top}\left((I-\eta H)^2+ \eta^2\left(\frac{1}{n}\sum_{i=1}^n H_i^2 - H^2\right)+O(\eps)\right)(\theta_k-\theta^*).
    \end{aligned}
\end{equation}
The step (\ref{compute_cond1}) can be iterated down to $\theta_0$, and the expectation is taken over uniform distribution $\mathcal{D}$. To ensure that the stochastic stability condition is satisfied, we then need (\ref{cond1}). On the other hand, the iteration for the RR scheme reads
\begin{equation}\label{iterRR}
\begin{aligned}
    \theta_{k+1} &= \theta^{k} -\eta G(x_i,\theta_k)(f(x_i,\theta_k)-y_i)\nabla_{\theta} f(x_i,\theta_k)\\
    &=\theta_k -\eta (\tilde G_{i,k}+O(\eps))\left( \nabla_{\theta} f(x_i, \theta^*)\nabla_{\theta} f(x_i, \theta^*)^{\top}+O(\eps)\right) (\theta_k-\theta^*),
\end{aligned}   
\end{equation}
where each data sample $x_i\sim \mathcal{D}^{(w)}$ is selected with probability
\begin{align*}
    p(x_i,\theta_k) = \frac{\lVert (f(x_i,\theta_k)-y_i)\nabla_{\theta} f(x_i,\theta_k) \rVert_2}{\sum_{j=1}^n \lVert (f(x_j,\theta_k)-y_j)\nabla_{\theta} f(x_j,\theta_k)\rVert_2} = \frac{\lVert \nabla_{\theta} f(x_i, \theta^*)\nabla_{\theta} f(x_i, \theta^*)^{\top}(\theta_k-\theta^*)\rVert_2}{\sum_{j=1}^n\lVert \nabla_{\theta} f(x_j, \theta^*)\nabla_{\theta} f(x_j, \theta^*)^{\top}(\theta_k-\theta^*)\rVert_2}+O(\eps)
\end{align*}
given $\{\Vert\theta_k-\theta^*\Vert_2\leq \eps\}$, and $G(x_i,\theta_k) = C(\theta_k)/\lVert \nabla l_i(\theta_k\rVert_2$, which can be rewritten here as
\begin{align*}
    G(x_i,\theta_k) &= \frac{\frac{1}{n}\sum_{j=1}^n \lVert(f(x_j,\theta_k)-y_j)\nabla_{\theta} f(x_i,\theta_k) \rVert_2}{\lVert(f(x_i,\theta_k)-y_i)\nabla_{\theta} f(x_i,\theta_k) \rVert_2}\\
    &= \frac{\frac{1}{n}\sum_{j=1}^n\lVert \nabla_{\theta} f(x_j, \theta^*)\nabla_{\theta} f(x_j, \theta^*)^{\top}(\theta_k-\theta^*)\rVert_2}{\lVert \nabla_{\theta} f(x_i, \theta^*)\nabla_{\theta} f(x_i, \theta^*)^{\top}(\theta_k-\theta^*)\rVert_2}+O(\eps) = \tilde G_{i,k}+O(\eps).
\end{align*}
Therefore, the RR iteration (\ref{iterRR}) can be rewritten as 
\begin{equation}
    \theta_{k+1}-\theta^*= (I-\eta \tilde G_{i,k} H_i + O(\eps))(\theta_k-\theta^*).
\end{equation}
Conditioned on $\{\Vert\theta_k-\theta^*\Vert_2\leq \eps\}$, we compute the expectation on the second moment
\begin{equation}\label{compute_cond2}
    \begin{aligned}
        \E_{\mathcal{D}^{(w)}}[&\lVert\theta_{k+1}-\theta^*\rVert_2^2] = \E_{\mathcal{D}^{(w)}}\left[(\theta_k-\theta^*)^{\top}\left(I - \eta \tilde G_i H_i+O(\eps)\right)^2(\theta_k-\theta^*)\right]\\
        & = (\theta_k-\theta^*)^{\top}\left((I-\eta H)^2+ \eta^2\left(\frac{1}{n}\sum_{i=1}^n\tilde G_{i,k} H_i^2- H^2\right)+O(\eps)\right)(\theta_k-\theta^*).
    \end{aligned}
\end{equation}
Similar to what we analyzed for SGD, the stability condition for RR scheme is deduced to be (\ref{cond2}).
\end{proof}
\begin{remark}
When $d=1$, all matrices in Lemma \ref{lem:stability} are scalar. It is straightforward to see that, for the second component in RR scheme, by Cauchy-Schwarz inequality, we get
\begin{align*}
   \frac{1}{n}\sum_{i=1}^n\tilde G_{i,k} H_i^2- H^2 &= \frac{1}{n}\sum_{i=1}^n\frac{\frac{1}{n}\sum_{j=1}^n |\nabla_{\theta}f(x_j,\theta^*)|^2}{|\nabla_{\theta}f(x_i,\theta^*)|^2}|\nabla_{\theta}f(x_i,\theta^*)|^4-H^2\\
   &= \frac{1}{n^2} \left(\sum_{i=1}^n |\nabla_{\theta}f(x_i,\theta^*)|^2\right)^2 \leq \frac{1}{n}\sum_{i=1}^n|\nabla_{\theta}f(x_i,\theta^*)|^4-H^2 = \frac{1}{n}\sum_{i=1}^n H_i^2-H^2.
\end{align*}
When the model $f$ has drastically different gradient magnitude in different data samples, compared to SGD, the RR scheme fundamentally acts as gradient magnitude averaging to allow a broader range of learning rates $\eta$ for stochastic stability. The RR scheme exhibits its particular strength of maintaining stability for stiff problems.
\end{remark}
\subsection{Local convergence analysis} 
In the non-convex scenario, it is relatively easier to obtain convergence results locally. In order to make a direct comparison with SGD in terms of local convergence rates, we adapt Theorem 4 and its proof strategy from \cite{mertikopoulos2020almost} to establish convergence analysis of the RR scheme. We use the same setup as in \cite{mertikopoulos2020almost}, which takes a step-size schedule of the form $\eta_k = \frac{\gamma}{(k+m)^p}$ for some $p\in(1/2, 1]$ with sufficiently large $\gamma, m>0$.  The key ingredient is the following decomposition for stochastic gradients
\begin{align}
    V_{k,j}:=\nabla l_j(\theta_k) &= \nabla L(\theta_k) + Z_j(\theta_k), \quad \quad  j\sim\mathcal{D}\quad \text{for SGD},\\
    \tilde V_{k,j}:=G_j(\theta_k)\nabla l_j(\theta_k) &= \nabla L(\theta_k) + \tilde Z_j(\theta_k), \quad \quad  j\sim\mathcal{D}^{(w)}\quad \text{for RR},
\end{align}
where $\mathcal{D}$ denotes the uniform distribution over samples, and $\mathcal{D}^{(w)}$ is the weighted distribution used in the RR scheme. $G_j(\theta_k) = \frac{1}{n}\sum_{i=1}^n \Vert\nabla l_i(\theta_k)\Vert_2/\Vert\nabla l_j(\theta_k)\Vert_2$ as before. Note that for both cases $\E_{ \mathcal{D}}[Z_{j}(\theta_k)] = \E_{ \mathcal{D}^{(w)}}[\tilde Z_{j}(\theta_k)]=0$, and we further assume that given a neighborhood $\mathcal{U}$ of $\theta_*$, for each $1\leq j\leq n$, there exists $\sigma_j>0$ such that
\begin{align}
    \sup_{\theta\in\mathcal{U}}\Vert\nabla l_j(\theta)\Vert_2^2\leq \sigma_j^2.
\end{align}
\begin{theorem}
Suppose in a convex compact neighborhood of $\theta_*$, there exists $\alpha>0$ so that $\nabla^2 L(\theta)\succcurlyeq \alpha I$. Given the assumptions and the step-size schedule introduced above, for a fixed $\delta\in(0,1)$, there exist neighborhoods $\mathcal{U}, \mathcal{U}_1$ containing $\theta_*$ such that
\begin{align}
    \P(E_{\infty}=\{\theta_k\in\mathcal{U}~\text{for all}~k\geq 1\}|\theta_1\in\mathcal{U}_1)\geq 1-\delta.
\end{align}
Furthermore, we have the local convergence rate for SGD
\begin{align}
    \E[\Vert\theta_k-\theta_*\Vert_2^2|E_{\infty}]&\leq \frac{\gamma^2}{(1-\delta)(2\alpha \gamma-1)k}\sup_j \sigma_j^2+o(1/k),\quad \text{if}~p=1,\\
    \E[\Vert\theta_k-\theta_*\Vert_2^2|E_{\infty}]&\leq \frac{\gamma}{2\alpha(1-\delta)k^p}\sup_j \sigma_j^2+o(1/k^p),\quad \text{if}~p<1.
\end{align}
And for the RR scheme, we have
\begin{align}
    \E[\Vert\theta_k-\theta_*\Vert_2^2|E_{\infty}]&\leq \frac{\gamma^2}{(1-\delta)(2\alpha \gamma-1)k}\left(\frac{1}{n}\sum_{i=1}^n \sigma_i^2\right)+o(1/k),\quad \text{if}~p=1,\\
    \E[\Vert\theta_k-\theta_*\Vert_2^2|E_{\infty}]&\leq \frac{\gamma}{2\alpha(1-\delta)k^p}\left(\frac{1}{n}\sum_{i=1}^n \sigma_i^2\right)+o(1/k^p),\quad \text{if}~p<1.
\end{align}
When $p=1$, we choose $\gamma$ large so that $2\alpha\gamma>1$.
\end{theorem}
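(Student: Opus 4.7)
The overall strategy is to follow the two-step template of the Mertikopoulos--Hallak--Kavis--Cevher argument that the paper cites: first, a confinement estimate showing that with probability at least $1-\delta$ the stochastic iterates never leave the neighborhood $\mathcal{U}$ on which the local strong convexity $\nabla^2 L\succcurlyeq \alpha I$ can be invoked; second, a one-step quadratic descent recursion that, when unrolled against the step-size schedule $\eta_k=\gamma/(k+m)^p$, yields the stated $O(1/k^p)$ rate. The two schemes differ in exactly one place, namely the conditional second moment of the stochastic gradient that enters the noise term of the recursion. For SGD this produces the factor $\sup_j\sigma_j^2$, while for the RR scheme a Cauchy--Schwarz argument collapses it to $\frac{1}{n}\sum_i\sigma_i^2$.

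For the confinement step, I would introduce the exit time $\tau:=\inf\{k:\theta_k\notin\mathcal{U}\}$ and work with the stopped process $Y_k:=\|\theta_{k\wedge\tau}-\theta_*\|_2^2$. Using the mean-zero decompositions $V_{k,j}=\nabla L(\theta_k)+Z_j(\theta_k)$ and $\tilde V_{k,j}=\nabla L(\theta_k)+\tilde Z_j(\theta_k)$, strong convexity of $L$ on $\mathcal{U}$, and the almost-sure gradient bounds $\|V_{k,j}\|_2\leq\sigma_j$ for SGD and $\|\tilde V_{k,j}\|_2=C(\theta_k)\leq\frac{1}{n}\sum_i\sigma_i$ for RR, one obtains a Robbins--Siegmund-type estimate
\[
\E[Y_{k+1}\mid\mathcal{F}_k]\leq(1-2\alpha\eta_k)Y_k+\eta_k^2 M.
\]
Since $p>1/2$ yields $\sum_k\eta_k^2<\infty$, the standard almost-supermartingale convergence argument together with Doob's maximal inequality gives $\P(\tau=\infty\mid\theta_1\in\mathcal{U}_1)\geq 1-\delta$ provided $\mathcal{U}_1\subset\mathcal{U}$ is shrunk small enough relative to $M\sum_k\eta_k^2$ and $\delta$.

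Conditional on $E_\infty=\{\tau=\infty\}$, the same one-step expansion yields, for $a_k:=\E[\|\theta_k-\theta_*\|_2^2\mid E_\infty]$, the recursion $a_{k+1}\leq(1-2\alpha\eta_k)a_k+\eta_k^2 M/\P(E_\infty)$. For SGD the per-sample almost-sure bound $\|V_{k,j}\|_2^2\leq\sigma_j^2$ gives $M_{\mathrm{SGD}}=\sup_j\sigma_j^2$. For RR a direct computation using the sampling law $\P(j)=\|\nabla l_j\|_2/Z$ with $Z(\theta)=\sum_i\|\nabla l_i(\theta)\|_2$ and $G_j\nabla l_j=C\,\nabla l_j/\|\nabla l_j\|_2$ gives
\[
\E_{\mathcal{D}^{(w)}}\bigl[\|\tilde V_{k,j}\|_2^2\bigr]=\sum_{j=1}^n\frac{\|\nabla l_j(\theta_k)\|_2}{Z(\theta_k)}\,C(\theta_k)^2=C(\theta_k)^2\leq\frac{1}{n}\sum_{i=1}^n\|\nabla l_i(\theta_k)\|_2^2\leq\frac{1}{n}\sum_{i=1}^n\sigma_i^2,
\]
where the first inequality is Cauchy--Schwarz applied to $(\frac{1}{n}\sum_i\|\nabla l_i\|_2)^2$. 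Substituting $M_{\mathrm{RR}}=\frac{1}{n}\sum_i\sigma_i^2$ yields the RR-side constants, which are never larger than the SGD-side constants and are strictly smaller whenever the gradient magnitudes are heterogeneous, matching the paper's underlying motivation.

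Unrolling the recursion with $\eta_k=\gamma/(k+m)^p$ is the classical Chung--Robbins lemma: for $p=1$ with $2\alpha\gamma>1$ one gets $a_k\leq \gamma^2 M/((2\alpha\gamma-1)(1-\delta)k)+o(1/k)$, while for $p\in(1/2,1)$ a dominant-balance argument gives $a_k\leq\gamma M/(2\alpha(1-\delta)k^p)+o(1/k^p)$, yielding all four displayed inequalities once the two values of $M$ are substituted. The main obstacle is the confinement step: the neighborhood $\mathcal{U}$ is not globally attracting, so the supermartingale estimate must be tight enough that the exit probability does not accumulate over infinitely many steps, and in particular one must carefully propagate the $O(\varepsilon)$ linearization errors around $\theta_*$ so that the drift $-2\alpha\eta_k Y_k$ genuinely dominates the noise $\eta_k^2 M$ uniformly in $k$ on the stopped process.
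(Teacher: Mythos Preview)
Your proposal is correct in outline and matches the paper closely in the two places that matter most: the second-moment bounds $\E_{\mathcal{D}}[\|V_{k,j}\|_2^2]\le\sup_j\sigma_j^2$ versus $\E_{\mathcal{D}^{(w)}}[\|\tilde V_{k,j}\|_2^2]=C(\theta_k)^2\le\frac{1}{n}\sum_i\sigma_i^2$, and the Chung--Robbins unrolling of the recursion against $\eta_k=\gamma/(k+m)^p$. Your observation that $\|\tilde V_{k,j}\|_2=C(\theta_k)$ is deterministic given $\theta_k$ is exactly the content of the paper's identity $\E_{\mathcal{D}^{(w)}}[\|\tilde V_{k,j}\|_2^2]=\frac{1}{n^2}\bigl(\sum_j\|\nabla l_j(\theta_k)\|_2\bigr)^2$.

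Where you genuinely diverge from the paper is in the confinement step. The paper does \emph{not} stop the process and apply Doob directly to a shifted supermartingale; instead it follows the Mertikopoulos--Hallak--Kavis--Cevher machinery: it tracks the cumulative noise via $M_k=2\sum_{l\le k}\eta_l\xi_l$, $S_k=\sum_{l\le k}\eta_l^2\|V_l\|_2^2$, and the submartingale $R_k=M_k^2+S_k$, introduces the auxiliary events $H_k=\{R_l\le\varepsilon\text{ for all }l\le k\}$, shows $H_{k-1}\subset E_k$, and bounds $\sum_l\P(H_{l-1}\setminus H_l)$ by a telescoping argument on $\E[R_k\one_{H_{k-1}}]$. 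Your stopped-process route (add $M\sum_{l\ge k}\eta_l^2$ to $Y_k$ to get a nonnegative supermartingale, then apply the maximal inequality) is more elementary and buys the same conclusion with fewer auxiliary objects; the paper's route buys a more explicit accounting of how the martingale part $M_k$ and the quadratic-variation part $S_k$ separately contribute, which is what the cited reference needs for its sharper statements.

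One technical slip to fix: the recursion you write for $a_k:=\E[\|\theta_k-\theta_*\|_2^2\mid E_\infty]$ is not valid as stated, because $E_\infty$ is not $\mathcal{F}_k$-measurable and conditioning on it destroys the mean-zero property of $\xi_{k,j}$. The paper handles this by working with $\E[D_k\one_{E_k}]$ (where $E_k\in\mathcal{F}_k$), deriving $\E[D_{k+1}\one_{E_{k+1}}]\le(1-2\alpha\eta_k)\E[D_k\one_{E_k}]+\eta_k^2 M$, unrolling that, and only at the end using $\E[D_k\mid E_\infty]\le\E[D_k\one_{E_k}]/\P(E_\infty)$ to insert the $1/(1-\delta)$ factor. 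You should route your argument through indicators in the same way.
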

\begin{proof} First of all, due to the local convexity of $L(\theta)$ near $\theta_*$, we have
\begin{align*}
   \langle \theta-\theta_*, \nabla L(\theta)\rangle \geq  \alpha \Vert \theta-\theta_*\Vert_2^2,
\end{align*}
for all $\theta\in\mathcal{K}$, where $\mathcal{K}$ is a convex compact neighborhood of $\theta_*$. We have the stochastic gradient updates as $\theta_{k+1} = \theta^{k}-\eta_k V_{k,j}$. Let $D_k := \Vert \theta_k-\theta_*\Vert_2^2$, then
\begin{equation}\label{aug261}
\begin{aligned}
    D_{k+1} &= \Vert \theta_k-\theta_*-\eta_k V_{k,j}\Vert_2^2\\
    &= \Vert \theta_k-\theta_*\Vert_2^2 - 2\eta_k\langle \theta_k-\theta_*, V_{k,j} \rangle + \eta_k^2 \Vert V_{k,j}\Vert_2^2\\
    & = \Vert \theta_k-\theta_*\Vert_2^2 - 2\eta_k\langle \theta_k-\theta_*, \nabla L(\theta_k) \rangle - 2\eta_k\langle \theta_k-\theta_*, Z_j(\theta_k) \rangle+ \eta_k^2 \Vert V_{k,j}\Vert_2^2\\
    &\leq (1-2\alpha \eta_k )D_k + 2\eta_k \xi_{k,j} + \eta_k^2 \Vert V_{k,j}\Vert_2^2,
\end{aligned}
\end{equation}
where $\xi_{k,j}=-\langle \theta_k-\theta_*, Z_j(\theta_k) \rangle $ and $\E[\xi_{k,j}|\mathcal{F}_k] = 0$. Same computations hold for RR scheme as well by replacing notations accordingly. The key idea of the proof is to control the error aggregation in $2\eta_{k} \xi_{k,j} + \eta_k^2 \Vert V_{k,j}\Vert_2^2$. We will only sketch the main steps and highlight where the difference between SGD and RR emerges, since the proof details can be found in \cite{mertikopoulos2020almost}. From now on, unless it is necessary, we omit the second subscript in $\xi, V$ for simplicity as they change in each iteration. For the error terms, one can define 
\begin{align}
    M_k =2 \sum_{l=1}^k \eta_l \xi_l \quad \text{and}\quad S_k = \sum_{l=1}^k \eta_l^2 \Vert V_l\Vert_2^2.
\end{align}
Define the cumulative mean square error $R_k = M_k^2+S_k$, then we have
\begin{equation}
    \begin{aligned}
        R_k &= (M_{k-1}+2\eta_k\xi_k)^2 + S_{k-1}+\eta_k^2\Vert V_k\Vert_2^2\\
        &=R_{k-1}+4M_{k-1}\eta_k\xi_k+4\eta_k^2\xi_k^2+\eta_k^2\Vert V_k\Vert_2^2.
    \end{aligned}
\end{equation}
It is easy to check that $R_k$ is sub-martingale, $\E[R_k|\mathcal{F}_k]\geq R_{k-1}$. The proof uses a finer condition by introducing the following events, let $\mathcal{U}$ be a neighborhood of $\theta^*$ and $\eps>0$,
\begin{align}
    E_k =\{\theta_l\in\mathcal{U} \text{ for all } 1\leq l\leq k\} \quad \text{and}\quad H_k = \{R_l\leq \eps \text{ for all } 1\leq l\leq k\}.
\end{align}
The property analysis of $E_k$ and $H_k$ is the same as in Lemma D.2 in \cite{mertikopoulos2020almost}, except for (D.19). Notice that (D.26) can be estimated as
\begin{align}
    \E_{\mathcal{D}}[\Vert V_{k,j}\Vert_2^2\one_{H_{k-1}}]&\leq \sup_j \Vert \nabla l_j(\theta_k)\Vert_2^2,\quad \text{for SGD}, \label{bound_1}\\
    \E_{\mathcal{D}^{(w)}}[\Vert \tilde V_{k,j}\Vert_2^2\one_{H_{k-1}}]&\leq \E_{\mathcal{D}^{(w)}}[G_j(\theta_k)^2\Vert \nabla l_j(\theta_k)\Vert_2^2] = \frac{1}{n^2}\big(\sum_{j=1}^n \Vert \nabla l_j(\theta_k)\Vert_2\big)^2, ~~\text{for RR}\label{bound_2},
\end{align}
and moreover,
\begin{align}
    \E_{\mathcal{D}}[\xi_{k,j}^2\one_{H_{k-1}}]&\leq 2\Vert \theta_k-\theta_*\Vert_2^2(\sup_j \Vert \nabla l_j(\theta_k)\Vert_2^2+\Vert \nabla L(\theta_k)\Vert_2^2),\quad \text{for SGD}, \\
    \E_{\mathcal{D}^{(w)}}[\xi_{k,j}^2\one_{H_{k-1}}]&\leq 2\Vert \theta_k-\theta_*\Vert_2^2(\frac{1}{n^2}\big(\sum_{j=1}^n \Vert \nabla l_j(\theta_k)\Vert_2\big)^2+\Vert \nabla L(\theta_k)\Vert_2^2),\quad\text{for RR}.
\end{align}
Putting together with other terms, we have (D.12) modified to be
\begin{equation}\label{iteration_1}
\begin{aligned}
    \E_{\mathcal{D}}[R_k\one_{H_{k-1}}]&\leq \E_{\mathcal{D}}[R_{k-1}\one_{H_{k-2}}]-\eps \P(H_{k-2}\setminus H_{k-1})\\
    &+\big((8\Vert \theta_k-\theta_*\Vert_2^2+1)\sup_j \Vert \nabla l_j(\theta_k)\Vert_2^2+8\Vert \theta_k-\theta_*\Vert_2^2\Vert \nabla L(\theta_k)\Vert_2^2 \big)\eta_k^2
    \end{aligned}
\end{equation}
for SGD, and
\begin{equation}\label{iteration_2}
\begin{aligned}
    \E_{\mathcal{D}^{(w)}}[R_k\one_{H_{k-1}}]&\leq \E_{\mathcal{D}^{(w)}}[R_{k-1}\one_{H_{k-2}}]-\eps \P(H_{k-2}\setminus H_{k-1})\\
    &+\big((8\Vert \theta_k-\theta_*\Vert_2^2+1)\frac{1}{n^2}\big(\sum_{j=1}^n \Vert \nabla l_j(\theta_k)\Vert_2\big)^2+8\Vert \theta_k-\theta_*\Vert_2^2\Vert \nabla L(\theta_k)\Vert_2^2 \big)\eta_k^2
    \end{aligned}
\end{equation}
for the RR scheme.

One should control the probability of escaping the neighborhood of $\theta_*$. Denote $r_{\mathcal{U}} =\sup_{\theta\in\mathcal{U}}\Vert \theta-\theta_*\Vert_2$, by taking the telescoping sums of (\ref{iteration_1}) and (\ref{iteration_2}), we get that
\begin{align}
   \E_{\mathcal{D}}[R_k\one_{H_{k-1}}]&\leq  \big((8r_{\mathcal{U}}^2+1)\sup_j \sigma_j^2+8r_{\mathcal{U}}^2 \sum_{i=1}^n \sigma_i^2\big)\sum_{l=1}^k \eta_l^2-\eps \sum_{l=1}^k\P(H_{l-2}\setminus H_{l-1})
\end{align}
for SGD, and
\begin{align}
   \E_{\mathcal{D}^{(w)}}[R_k\one_{H_{k-1}}]&\leq  \big((8r_{\mathcal{U}}^2+1)\frac{1}{n}\sum_{i=1}^n \sigma_i^2 +8r_{\mathcal{U}}^2 \sum_{i=1}^n \sigma_i^2\big)\sum_{l=1}^k \eta_l^2-\eps \sum_{l=1}^k\P(H_{l-2}\setminus H_{l-1})
\end{align}
for the RR scheme. Since the left hand sides are non-negative, we get that
\begin{align}
    \sum_{l=1}^k\P(H_{l-1}\setminus H_{l})\leq \frac{R_*}{\eps}\sum_{l=1}^{\infty} \eta_l^2= \frac{R_* \gamma^2}{\eps}\sum_{l=1}^{\infty} \frac{1}{(l+m)^{2p}},
\end{align}
with $R_*=(8r_{\mathcal{U}}^2+1)\sup_j \sigma_j^2+8r_{\mathcal{U}}^2 \sum_{i=1}^n \sigma_i^2$ for SGD and $R_*=(8r_{\mathcal{U}}^2+1)\frac{1}{n}\sum_{i=1}^n \sigma_i^2 +8r_{\mathcal{U}}^2 \sum_{i=1}^n \sigma_i^2$ for RR scheme, respectively. Because $p>1/2$, the sum on the right hand side is finite. For any $\delta\in(0,1)$, we can choose $m$ sufficiently large, so that $\frac{R_* \gamma^2}{\eps}\sum_{l=1}^{\infty} \frac{1}{(l+m)^{2p}}<\delta$. With that, for any $k$, we get
\begin{align}
    \P(H_k) = 1-\sum_{l=1}^k\P(H_{l-1}\setminus H_{l})\geq 1-\delta.
\end{align}
As a consequence, since $E_{\infty} = \bigcap_{k=1}^{\infty}E_k$, we will obtain
\begin{align}
    \P(E_{\infty})= \inf_k \P(E_k)\geq \inf_k \P(H_{k-1})\geq 1-\delta.
\end{align}
For the convergence rate, from (\ref{aug261}) we have
\begin{equation}
    \begin{aligned}
        \E[D_{k+1}\one_{E_{k+1}}]\leq \E[D_{k+1}\one_{E_k}]\leq (1-2\alpha \eta_k )\E[D_k\one_{E_k}]  + \eta_k^2 \E[\Vert V_{k}\Vert_2^2\one_{E_k}].
    \end{aligned}
\end{equation}
The estimate of the second term above is similar to (\ref{bound_1}) and (\ref{bound_2}). Now we insert the expression of $\eta_k$ and only consider $p=1$ for shortness, since the result can be derived for $p<1$ verbatim, we get that
\begin{align}
    \E_{\mathcal{D}}[D_{k+1}\one_{E_{k+1}}]&\leq \big(1-\frac{2\alpha \gamma}{k+m} \big)\E_{\mathcal{D}}[D_k\one_{E_k}]  + \frac{\gamma^2}{(k+m)^2} \sup_j \sigma_j^2, \quad \text{ for SGD,}\\
    \E_{\mathcal{D}^{(w)}}[D_{k+1}\one_{E_{k+1}}]&\leq \big(1-\frac{2\alpha \gamma}{k+m} \big)\E_{\mathcal{D}^{(w)}}[D_k\one_{E_k}]  + \frac{\gamma^2}{n(k+m)^2} \sum_{i=1}^n \sigma_i^2, \quad \text{ for RR}.
\end{align}
Therefore, we eventually get that, with $\gamma$ large so that $2\alpha\gamma>1$,
\begin{align}\label{result_1}
   \E_{\mathcal{D}}[\Vert\theta_k-\theta_*\Vert_2^2|E_{\infty}]\leq \frac{\E_{\mathcal{D}}[D_k\one_{E_{\infty}}] }{\P(E_{\infty})}\leq \frac{\gamma^2}{(1-\delta)(2\alpha \gamma-1)k}\sup_j \sigma_j^2+o(1/k)
\end{align}
for SGD, and 
\begin{align}\label{result_2}
   \E_{\mathcal{D}^{(w)}}[\Vert\theta_k-\theta_*\Vert_2^2|E_{\infty}]\leq \frac{\gamma^2}{(1-\delta)(2\alpha \gamma-1)k}\left(\frac{1}{n}\sum_{i=1}^n \sigma_i^2\right)+o(1/k)
\end{align}
for the RR scheme.
\end{proof}
\begin{remark}
The convergence rate comparison result is similar to \cite{needell2014stochastic}, in the spirit that for loss functions with drastically different subfunction slopes, the RR scheme performs as averaging to speed up the convergence rate. We should point it out that the analysis in \cite{needell2014stochastic} only considers strong convex objectives, and the weighted SGD they investigate has weights proportional to Lipschitz constants of $\nabla l_i$ rather than $\Vert \nabla l_i\Vert_2$.
\end{remark}

\subsection{Comments from the asymptotic viewpoint}
With the derivations of (\ref{sme1}) and (\ref{sme2}), one would hope to leverage stochastic calculus tools to give a short and illustrative picture for stochastic algorithm behavior comparisons. We make following remarks which fully use the trace information of covariance matrices, which may indicate the faster local convergence and less oscillation behaviors of the RR scheme.

\begin{remark}[Local convergence rate in the continuous-time limit]\label{rmk:conv}
Suppose $\theta_*$ is a local minimum, and there exists $r>0$ such that the stochastic trajectories
$\Theta_t$ with the starting point $\Theta_0=\theta_0\in B(\theta_*,r)$ stay inside the ball
$B(\theta_*,r)$ for $t\geq 0$. Also the local strong convexity holds: there exists $c_0>0$ such that
$(x-y)^{\top}(\nabla L(x)-\nabla L(y))\geq c_0\Vert x-y \Vert_2^2$ for $x,y \in B(\theta_*,r)$, then
the trajectory driven by (\ref{sme2}) converges faster to $\theta_*$ compared to the
trajectory driven by (\ref{sme1}) in the $L^2(0,t;L^2(\R^d))$-sense.
\end{remark}

Let us elaborate on the Remark \ref{rmk:conv}. We denote the solutions to \eqref{sme1} and \eqref{sme2} as $\Theta^1$ and $\Theta^2$ respectively and compute their convergence rate to the local minimum in $L^2(0,t;L^2(\R^d))$. 
Consider the function $f(x)
=\frac{1}{2} (x-\theta_*)^{\top}(x-\theta_*)$. By Ito's lemma,
\begin{align*}
  df(\Theta^i_t ) = \Big(-(\Theta^i_t-\theta_*)^{\top} m(\Theta^i_t ) +\frac{\eta
  }{2}\Tr(\Sigma_i(\Theta^i_t))\Big) dt + (\Theta^i_t-\theta_*)^{\top}\sigma_i(\Theta^i_t) dW_t
\end{align*}
for $i=1,2$. Therefore, for $i=1,2$,
\begin{align*}
  \E[\Vert\Theta^i_t-\theta_*\Vert_2^2] &= \Vert\theta_0-\theta_*\Vert_2^2 +
  \E\Big[\int_0^t\Big( -(\Theta^i_s-\theta_*)^{\top}\nabla L(\Theta^i_s)+\frac{\eta
    }{2}\Tr(\Sigma_i(\Theta^i_s))\Big)ds\Big]\\ &\leq \Vert\theta_0-\theta_*\Vert_2^2 + \frac{\eta
  }{2}\int_0^t \E\left[\Tr(\Sigma_i(\Theta^i_s))\right]ds - c_0
  \int_0^t\E[\Vert\Theta^i_s-\theta_*\Vert_2^2] ds,
\end{align*}
where the last line uses the local strong convexity and $\nabla L(\theta_*)=0$. Then we deduce that
\begin{align*}
  \int_0^t \E[\Vert\Theta^i_s-\theta_*\Vert_2^2]ds\leq \int_0^t
  e^{-c_0(t-s)}\left(\Vert\theta_0-\theta_*\Vert_2^2 + \frac{\eta }{2}\int_0^s
  \E\left[\Tr(\Sigma^i(\Theta^i_{\tau}))\right]d\tau\right)ds.
\end{align*}
Unfortunately, a direct comparison between two independent stochastic processes is unclear and out of our reach, but we speculate that in order to obtain a faster convergence to the global minimum, it should
have a smaller trace of the covariance matrix. Indeed, by Cauchy-Schwarz, the covariance matrix from
(\ref{sme2}) has a smaller trace at every small neighborhood of points coinciding with the
trajectory from (\ref{sme1}) since
\begin{equation}\label{CSineq}
\begin{aligned}
  \Tr(\Sigma^2(\Theta_t)) = \frac{Z(\Theta_t)}{n^2} &\sum_{i=1}^n \Vert\nabla l_i(\Theta_t)\Vert_2 - \Vert m(\Theta_t)\Vert_2^2\\
    &\leq \frac{1}{n}\sum_{i=1}^n \Vert\nabla l_i(\Theta_t)\Vert_2^2 -\Vert m(\Theta_t)\Vert_2^2 =  \Tr(\Sigma^1(\Theta_t)).
    \end{aligned}
\end{equation}
Back to the discrete algorithms, the above computations also indicate that the RR scheme (\ref{IpSamp}) improves the convergence rate to a local minimum.

\begin{remark}[Deviation from the deterministic gradient descent]
Suppose that $\nabla L$ is uniformly Lipschitz continuous, i.e., there exists $B>0$ such that $\Vert\nabla L(x)-\nabla L(y)\Vert_2\leq B\Vert x-y\Vert_2$ for all $x,y\in\R^d$. Let $\phi_t$ denote the solution to
\begin{align*}
    \dot \phi_t = -\nabla L(\phi_t)
\end{align*}
with $\phi_0=\theta$, and $\Theta_t$ be the solution to
\begin{align*}
   d \Theta_t = -\nabla L(\Theta_t)+\sqrt{\eta}\sigma(\Theta_t)dW_t
\end{align*}
with $\Theta_0=\theta$, then we have an upper bound on the probability of deviation
\end{remark}
\begin{lemma}\label{lem:dev}
For any $\delta>0$ and $0<T<\infty$, we have the inequality
\begin{align}\label{devbound}
    \P_{\theta}\left(\sup_{t\in[0,T]}\Vert \Theta_t-\phi_t\Vert_2>\delta\right)\leq \eta c' \E_{\theta}\left[\int_0^T \Tr(\sigma(\Theta_s)\sigma(\Theta_s)^{\top})ds\right],
\end{align}
where $c'$ only depends on $\delta, T$ and $B$. 
\end{lemma}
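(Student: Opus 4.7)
The plan is to reduce the probability bound to a second-moment estimate on the martingale part of the SDE via a deterministic Gronwall argument, and then invoke Doob's maximal inequality together with the Ito isometry.

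First, I would set $D_t := \Theta_t - \phi_t$ and use the defining equations for both processes to write
\begin{align*}
D_t = -\int_0^t \bigl(\nabla L(\Theta_s) - \nabla L(\phi_s)\bigr)\,ds + \sqrt{\eta}\,M_t,\qquad M_t := \int_0^t \sigma(\Theta_s)\,dW_s.
\end{align*}
Taking norms and using the hypothesis $\Vert \nabla L(x) - \nabla L(y)\Vert_2 \le B\Vert x-y\Vert_2$ yields the pathwise bound
\begin{align*}
\Vert D_t\Vert_2 \le B\int_0^t \Vert D_s\Vert_2\,ds + \sqrt{\eta}\,\Vert M_t\Vert_2.
\end{align*}
Applying a Gronwall-type inequality (for non-decreasing forcings after taking the running supremum on the right) then gives, for every $t\in[0,T]$,
\begin{align*}
\sup_{s\in[0,t]}\Vert D_s\Vert_2 \;\le\; \sqrt{\eta}\, e^{BT}\sup_{s\in[0,t]}\Vert M_s\Vert_2.
\end{align*}

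Next, I would transfer the deviation event to the martingale. By the previous inequality,
\begin{align*}
\P_\theta\!\left(\sup_{t\in[0,T]}\Vert D_t\Vert_2 > \delta\right) \;\le\; \P_\theta\!\left(\sup_{t\in[0,T]}\Vert M_t\Vert_2 > \frac{\delta e^{-BT}}{\sqrt{\eta}}\right).
\end{align*}
Since $M_t$ is a continuous $\R^d$-valued martingale with $\Vert M_t\Vert_2^2$ a submartingale, Doob's $L^2$ maximal inequality together with the Ito isometry gives
\begin{align*}
\E_\theta\!\left[\sup_{t\in[0,T]}\Vert M_t\Vert_2^2\right] \;\le\; 4\,\E_\theta[\Vert M_T\Vert_2^2] \;=\; 4\,\E_\theta\!\left[\int_0^T \Tr\bigl(\sigma(\Theta_s)\sigma(\Theta_s)^\top\bigr)\,ds\right].
\end{align*}

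Finally, I would apply Markov's inequality to the preceding event, obtaining
\begin{align*}
\P_\theta\!\left(\sup_{t\in[0,T]}\Vert M_t\Vert_2 > \frac{\delta e^{-BT}}{\sqrt{\eta}}\right) \;\le\; \frac{\eta\, e^{2BT}}{\delta^2}\,\E_\theta\!\left[\sup_{t\in[0,T]}\Vert M_t\Vert_2^2\right],
\end{align*}
and combining with the Doob bound yields the claimed inequality with $c' = 4e^{2BT}/\delta^2$, which depends only on $\delta$, $T$, and $B$. The only delicate step is the Gronwall argument, since the forcing term $\sqrt{\eta}\,\Vert M_t\Vert_2$ is random and not monotone in $t$; I would handle this by first replacing it by its running supremum $\sqrt{\eta}\sup_{s\le t}\Vert M_s\Vert_2$ (which is monotone), and then applying the standard scalar Gronwall lemma pathwise. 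Everything else is a routine chain of Doob plus Markov.
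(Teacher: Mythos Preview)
Your proposal is correct and follows essentially the same route as the paper: integral form of the difference, Lipschitz bound, Gronwall, then a tail bound via a maximal inequality on the stochastic integral. The only cosmetic difference is that the paper invokes Burkholder--Davis--Gundy where you use Doob's $L^2$ maximal inequality plus the It\^o isometry; for the $L^2$ case these are equivalent, and your version has the bonus of giving the explicit constant $c'=4e^{2BT}/\delta^2$ and of being more careful about the Gronwall step (the paper applies it directly without mentioning the running-supremum trick).
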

The proof of inequality (\ref{devbound}) can be found in the Appendix \ref{sec:appendixB}. Now if we consider $\Theta_t^1$ in (\ref{sme1}) for SGD along with $\Theta_t^2$ in (\ref{sme2}) for the RR scheme, the trace bound in Lemma \ref{lem:dev} implies that (\ref{sme2}) is closer to $\phi_t$ with a higher probability, simply by taking (\ref{CSineq}). In terms of discrete stochastic algorithms, it also indicates that the RR scheme is more deterministic compared to SGD.

\section{Experiments}\label{sec:experiments}
The analysis above suggests that, for non-convex optimization problems, the RR scheme is more
likely to find the global minimum compared to the vanilla SGD, especially when the global minimum
lies in the sharp valley. We empirically verify this in several examples from both data sciences and
physical sciences. In particular, we study (1) robust statistics problems under data feature
disparities, and (2) geometric optimization problems in computational chemistry.

\subsection{Robust classification/regression}
In this example, we consider to use the Welsch loss (see \cite{barron2019general}) from robust
statistics, where the corresponding loss for each data sample $i$ is
\begin{align}
  l_i(\theta) = 1-\exp(-(y_i-\theta^{\top}x_i)^2/2).
\end{align}
This loss function has found many applications in regression problems dealing with
outliers. Suppose that the goal is to find a global optimizer for a mixed population of multiple
subgroups: part of them are quite sensitive in a certain trait, while the rest are much less
sensitive in the same trait. In the following two examples, the RR scheme randomly selects the 
sub-population $j$ with the probability proportional to $a_j \Vert \nabla
l_j(\theta_k)\Vert_2$ with replacement in each iteration, where $a_j$ denotes the sub-population
proportion.

\begin{figure}[h!]
  \centering
  \begin{tabular}{ccc}
    \includegraphics[scale=0.21]{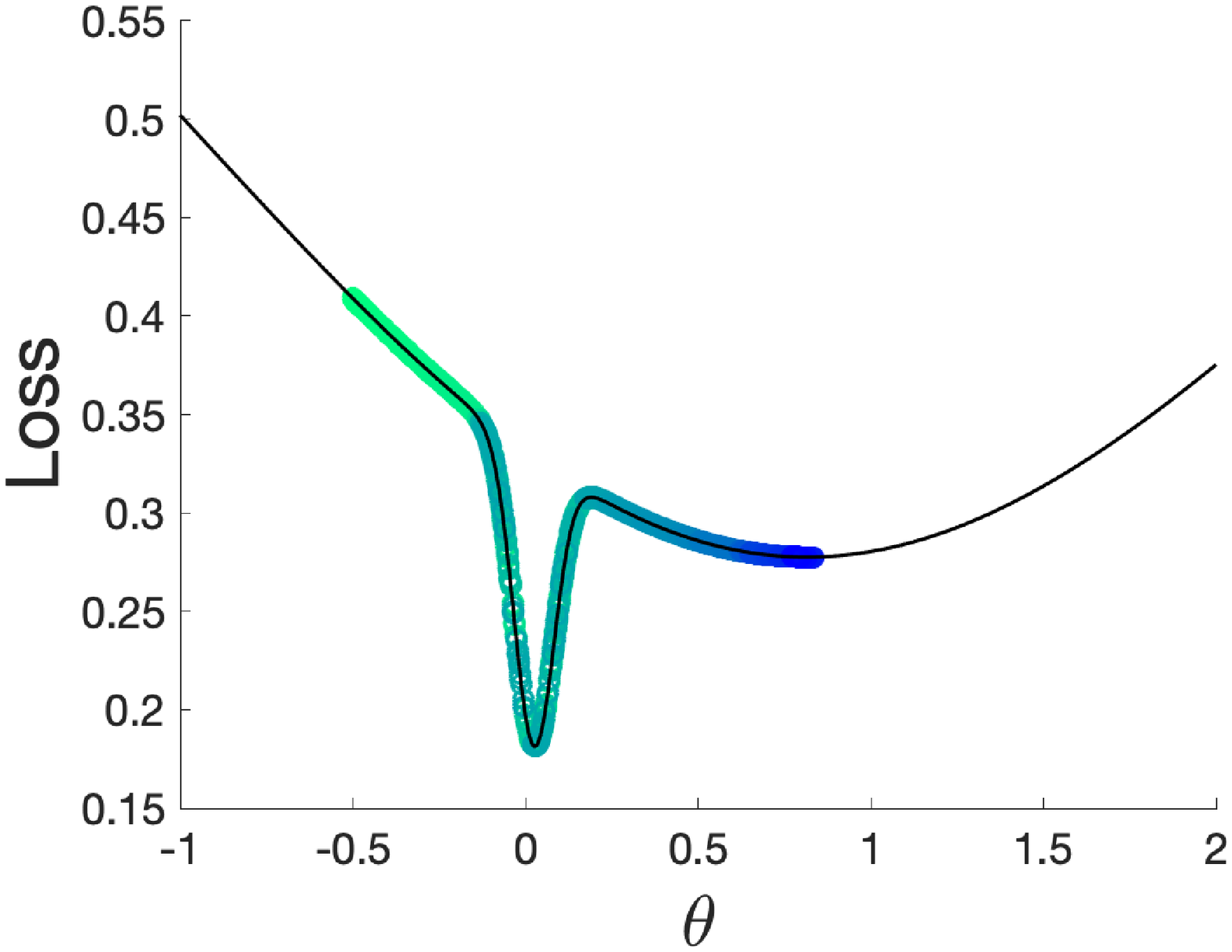}&
    \includegraphics[scale=0.21]{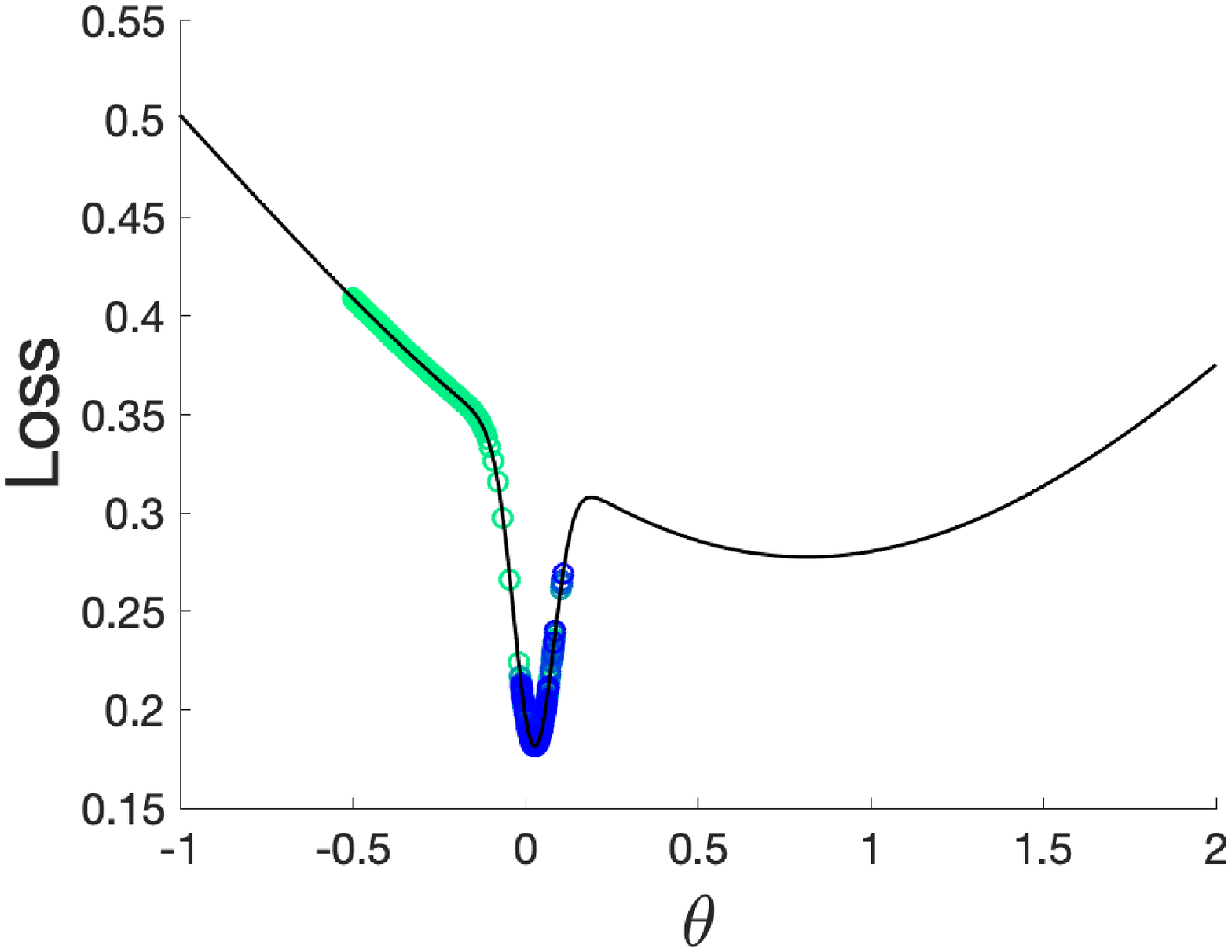} &
    \includegraphics[scale=0.21]{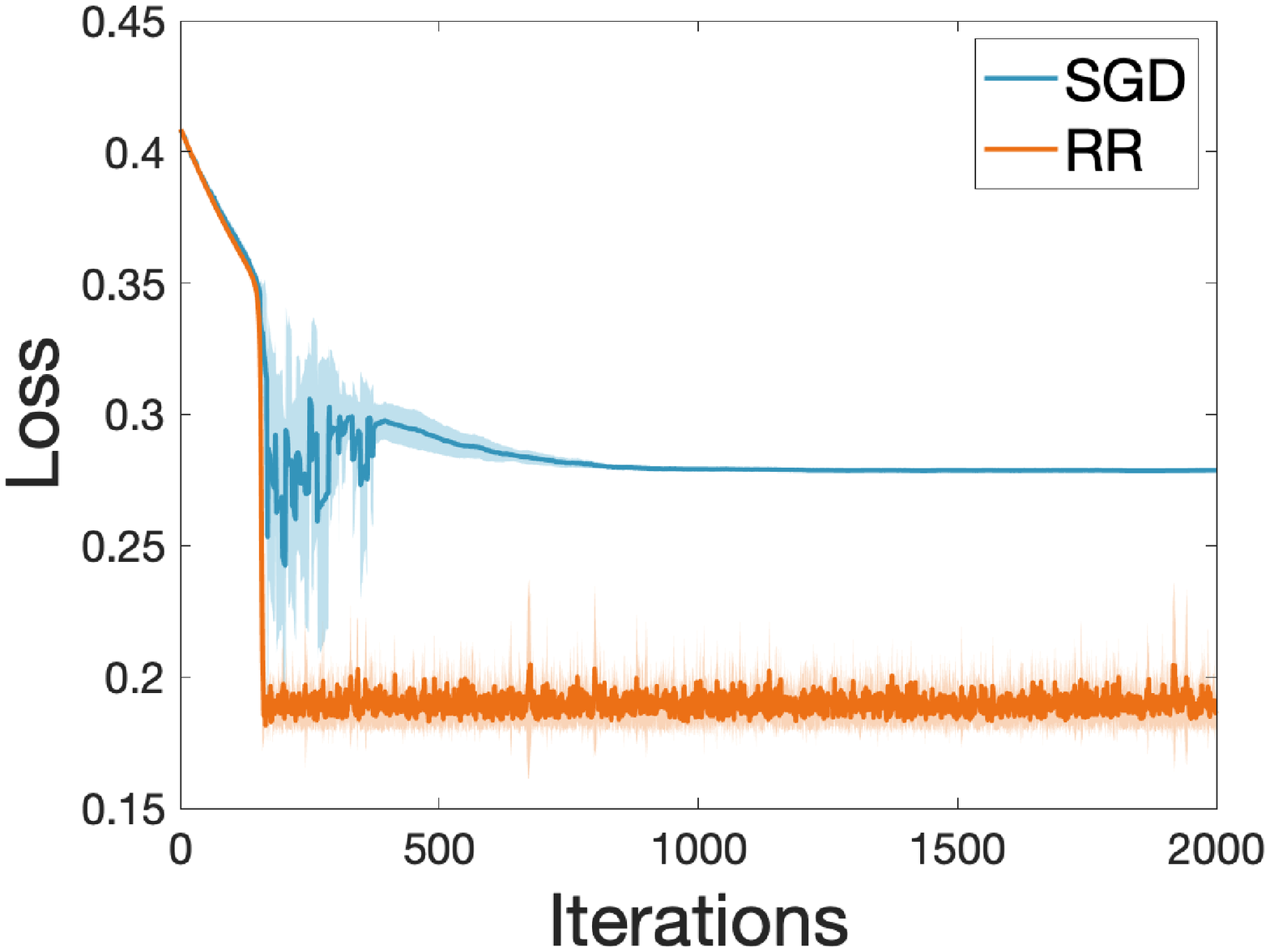}\\
    (1) Vanilla SGD &(2) RR & (3) Loss
  \end{tabular}
  \caption{We set the starting point $\theta_0=-0.5$ with a constant learning rate $\eta = 0.015$ in
    the robust classification problem. The color gradient of circles from green to blue shows how
    iterations proceed. Plots (1) and (2) show the trajectories for one trial under vanilla SGD and
    RR, respectively. Plot (3) show the loss comparisons over $10$ trials.}
  \label{fig:synthetic1}
\end{figure}
\subsubsection{Classification}
The dataset $\{(x_i,y_i)\}_{1\leq i \leq N}, x_i\in\R,y_i\in\{0,1\}$ consists of $x$ from two
subgroups 
\begin{itemize}
\item Subgroup $1$: $x_i = 20 +\mathcal{N}(0,1)$; total number $N_1=800$.
\item Subgroup $2$: $x_i = 0.5 +\mathcal{N}(0,1/4)$; total number $N_2=4000$.
\end{itemize}
Here $a_1/a_2=1/5$. The class $y_i\sim \text{Ber}(1/2)$ is preassigned for each data point, as we assume that each subgroup contains individuals belonging to different classes. The goal is to find the global minimum rather than a local minimum, even though it is flat. From Fig \ref{fig:synthetic1} we can see that in contrast to the vanilla SGD trajectory escapes to the nearby flat local minimum, the RR scheme trajectory stays inside the sharp valley to reach the global minimum.

\subsubsection{Regression} 
The dataset $\{(\textbf{x}_i,y_i)\}_{1\leq i \leq N}, \textbf{x}_i\in\R^d, d=10$ is composed of
samples from two subgroups

\begin{itemize}
\item Subgroup $1$: $\textbf{x}_i = 20\textbf{e} +\mathcal{N}(0,I_d)$; total number $N_1=2000$.
\item Subgroup $2$: $\textbf{x}_i = \frac{1}{4}\textbf{e} +\frac{1}{2}\mathcal{N}(0,I_d)$; total number $N_2=800$.
\end{itemize}
Here $a_1/a_2=5/2$. The exact regression coefficient $\beta^*\in\R^d$ is picked by $\beta^*\sim \mathcal{N}(0,I_d)$ and
the uncorrupted response variables are $y_i^* = \textbf{x}_i^{\top} \beta^*$. The corrupted response
variables are generated by $y_i = y_i^*+u_i+\eps_i$, where $u_i\sim \text{Unif}([-3\lVert
  y^*\rVert_{\infty}, 3\lVert y^*\rVert_{\infty}]), \eps_i\sim\frac{1}{10}\mathcal{N}(0,1)$. Fig \ref{fig:synthetic2} shows that for a wide range of learning rate choices, the RR scheme selects a better minimum in a faster speed compared to SGD.
\begin{figure}[h!]
  \centering
  \begin{tabular}{ccc}
    \includegraphics[scale=0.22]{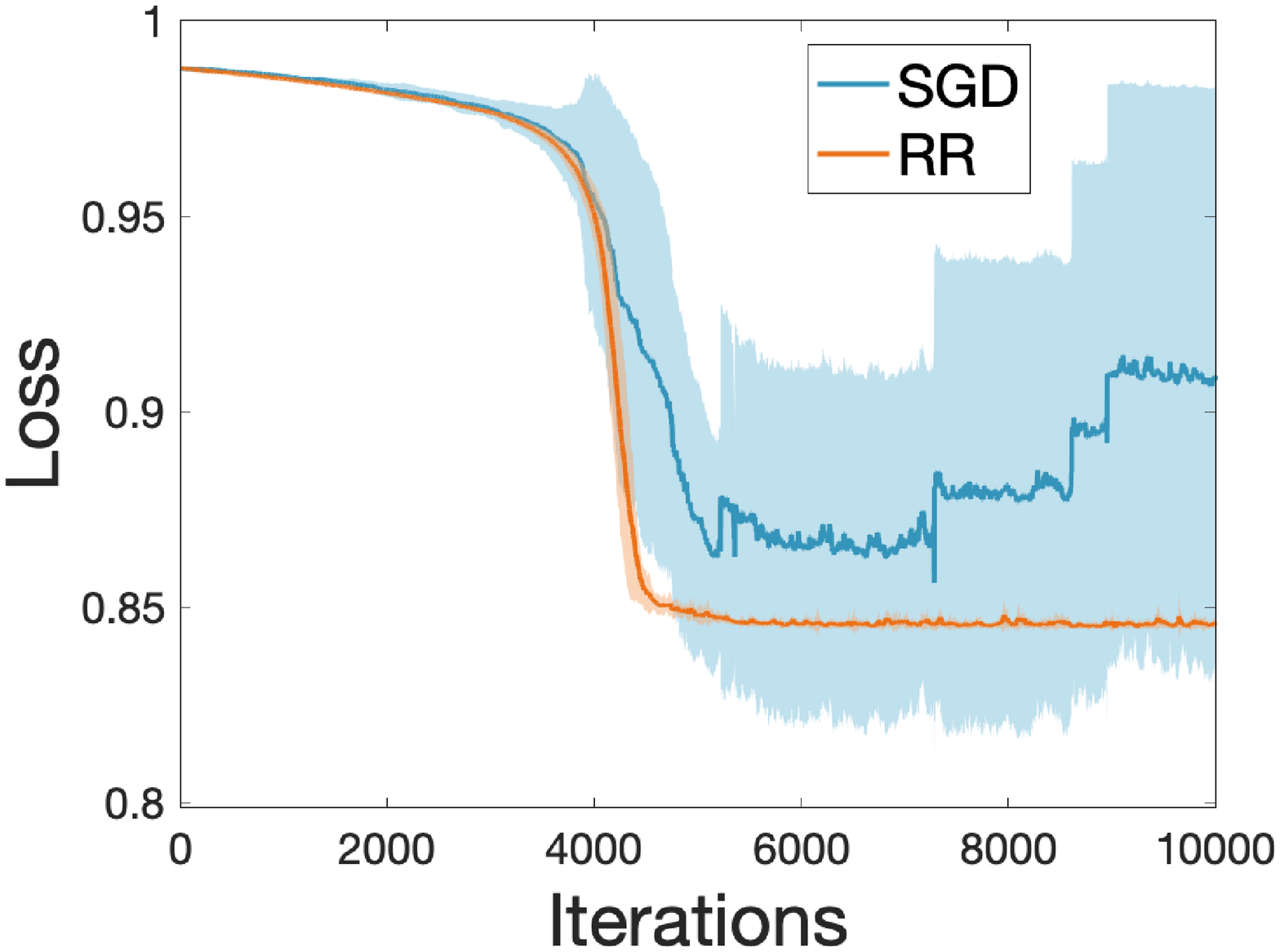}&
    \includegraphics[scale=0.22]{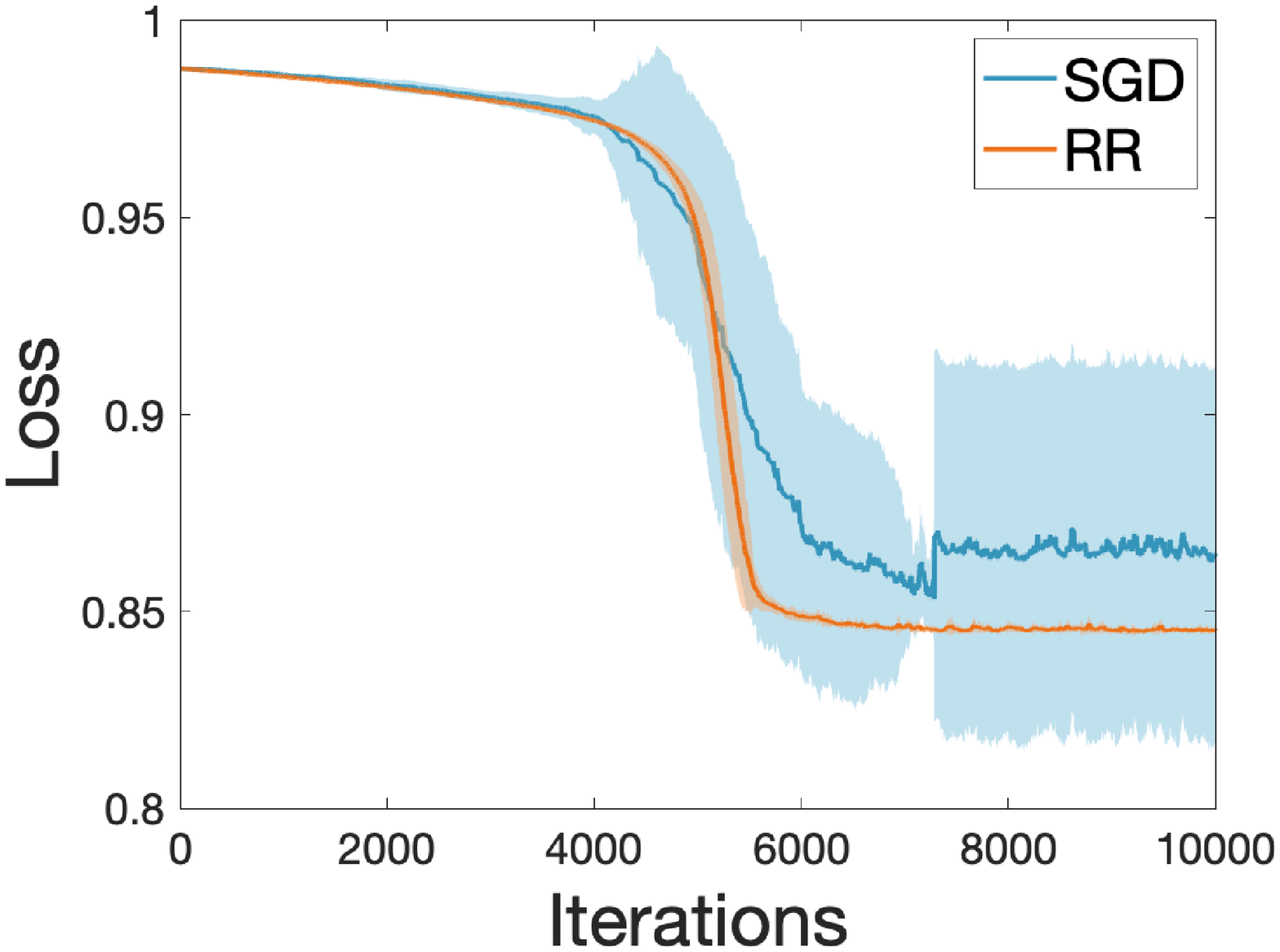}&
    \includegraphics[scale=0.22]{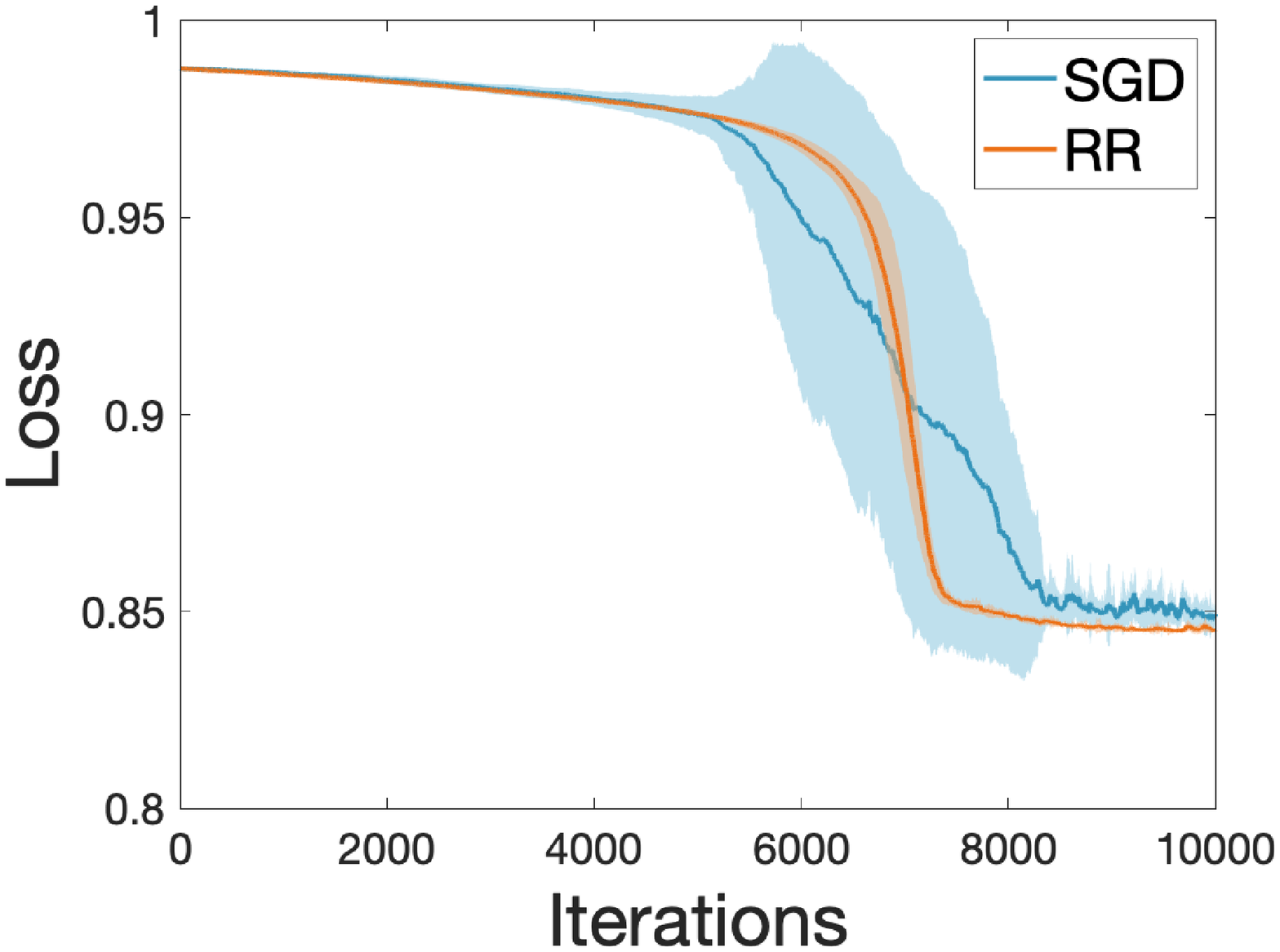} \\
     (1) $\eta=0.5$  &(2) $\eta=0.4$ &(3) $\eta=0.3$ 
  \end{tabular}
  \caption{Comparisons of the vanilla SGD and the RR scheme for various learning rates over 10 trials
    in robust regression. For each plot, we start from the same $\beta_0 =
    \beta^*+5\textbf{e}+{\eps}, {\eps}\sim \mathcal{N}(0,I_d)$. We observe with a relatively large
    learning rate, SGD can deviate while the RR scheme is stable around the local minimum. Even
    when the learning rate decreases and SGD converges to the same minimum that the RR scheme
    arrives, the convergence speed of SGD is slower.}
  \label{fig:synthetic2}
\end{figure}
\subsection{Computational chemistry}\label{chemistry}
The problem here is to find the global minimum of a potential energy surface (PES) that typically
gives a mathematical description of the molecular structure and its energy. In general, assuming
that there are $n$ atoms to form a molecule, we consider minimizing the particle interacting energy
of the form
\begin{align*}
  E(\textbf{z}_1,\textbf{z}_2,\cdots, \textbf{z}_m) = \sum_{i<j}^m V(\textbf{z}_i,\textbf{z}_j),
\end{align*}

where $V$ is a bi-atom potential function, and $\textbf{z}_k\in \R^d, 1\leq k\leq m, d\geq 1$ denote the atom's position. In particular, the global minimum represents the most
stable conformation with respect to location arrangements of atoms. Though using SGD for a large
system is computationally efficient, trying to find the global minimum with SGD can be difficult,
especially when the global minimum lies in a sharp valley. The RR scheme outperforms SGD in terms of
the likelihood of identifying the global minimum. This is demonstrated in Fig \ref{fig:pes} by
looking at two examples. The first one is the M\"{u}ller-Brown potential \cite{muller1979location}.
The second one is an artificial large system with the interacting function of Gaussian type
\begin{align*}
  V_k(\textbf{z}_i,\textbf{z}_k) = \exp\left(-(\textbf{z}_i-\textbf{z}_k)^{\top}M_k (\textbf{z}_i-\textbf{z}_k)\right),\quad M_k = \begin{bmatrix}
    a_k & b_k/2\\
    b_k/2 & c_k 
  \end{bmatrix},
\end{align*}
with $\textbf{z} = (x,y)\in\R^2$. We assume that except for the atom $i$, the rest of atoms' positions are fixed, then it is to consider minimizing the potential energy $\min_{\textbf{z}_i}
\frac{1}{m-1}\sum_{k\neq i}^{m} V_k(\textbf{z}_i,\textbf{z}_k)$ and finding the optimal $\textbf{z}_i$. Detailed parameters
setups are provided in the Appendix \ref{sec:appendixC}.
\begin{figure}[h!]
  \centering
  \setlength{\tabcolsep}{-4pt}
  \begin{tabular}{cccc}
    \includegraphics[scale=0.19]{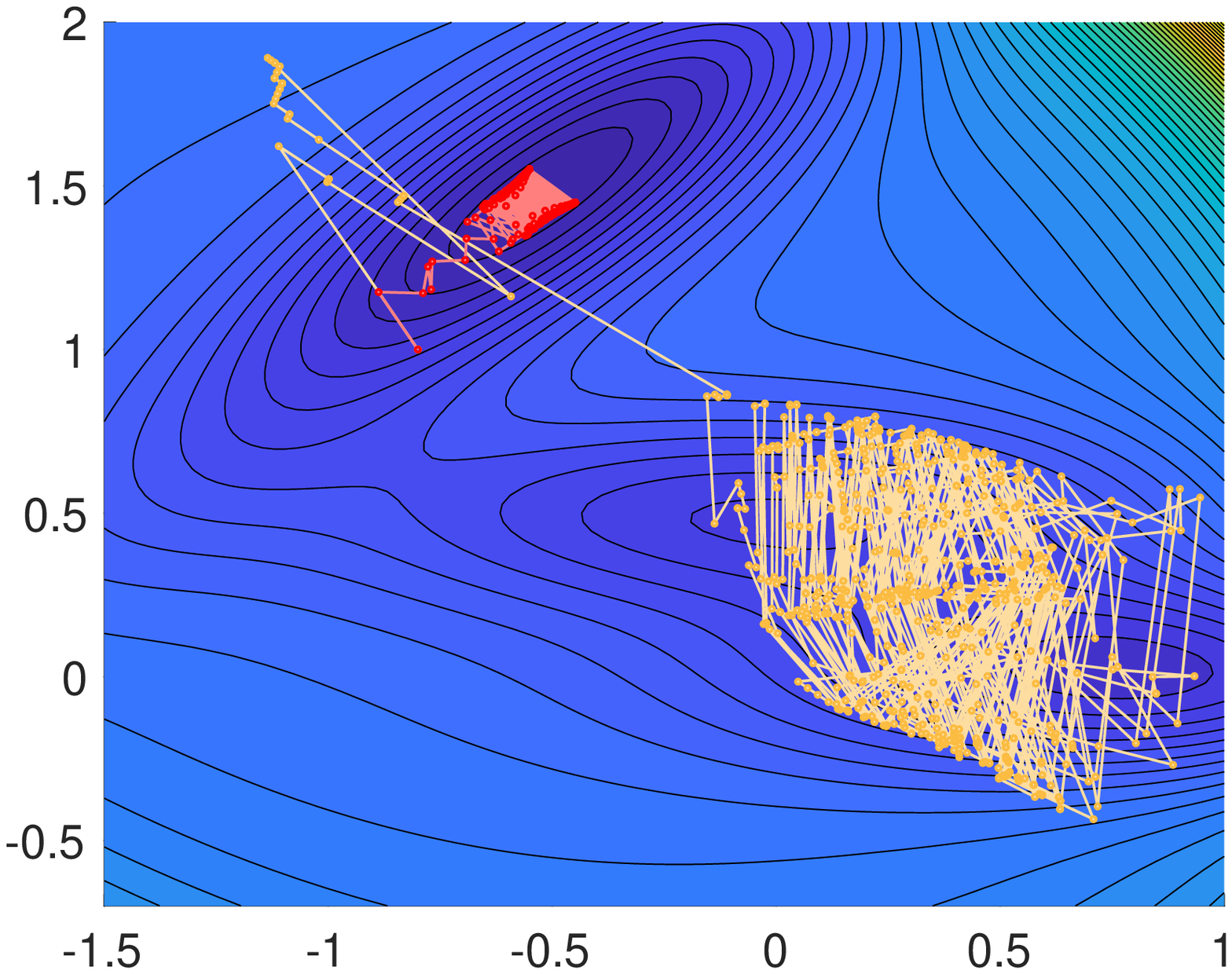}&
    \includegraphics[scale=0.19]{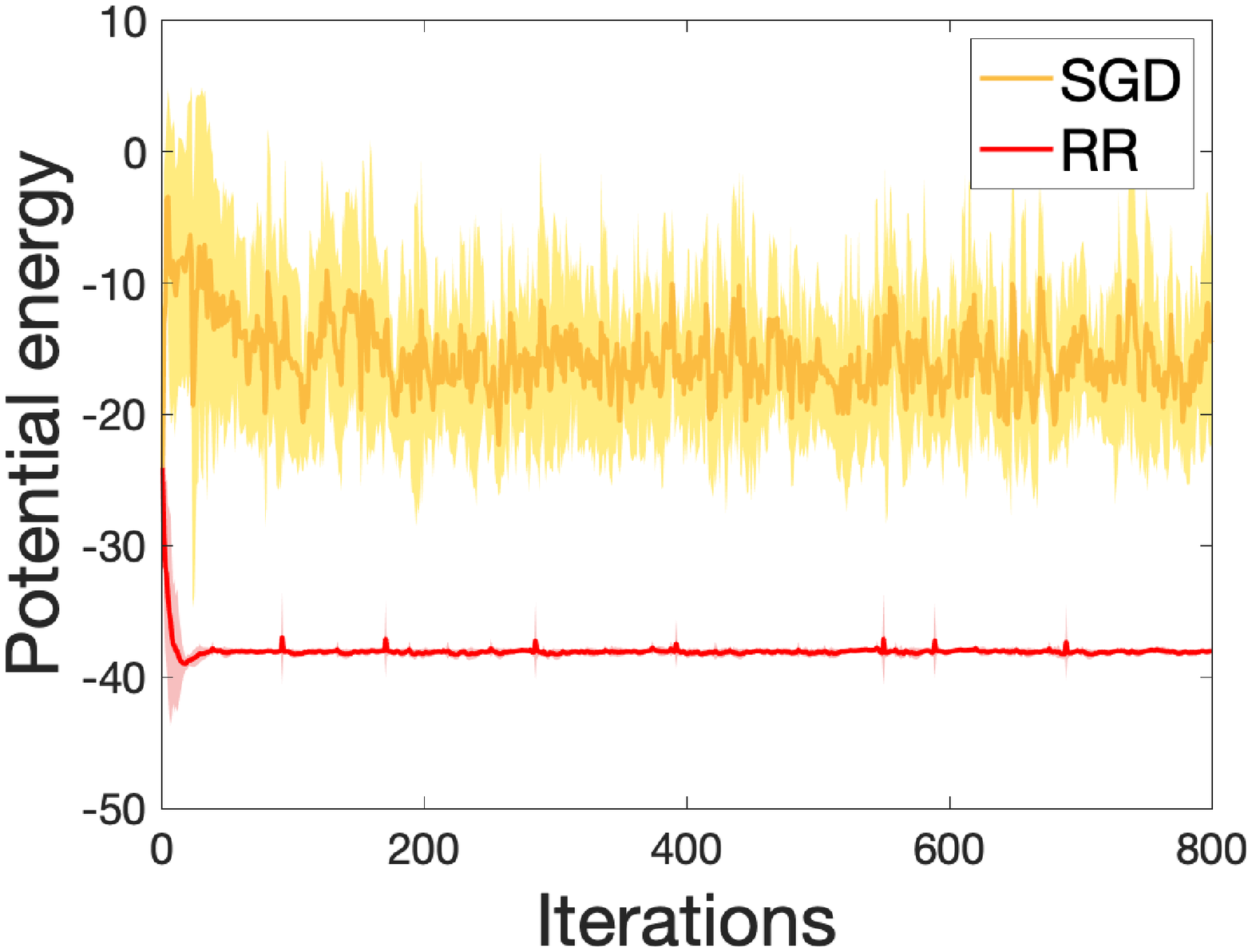}&
    \includegraphics[scale=0.19]{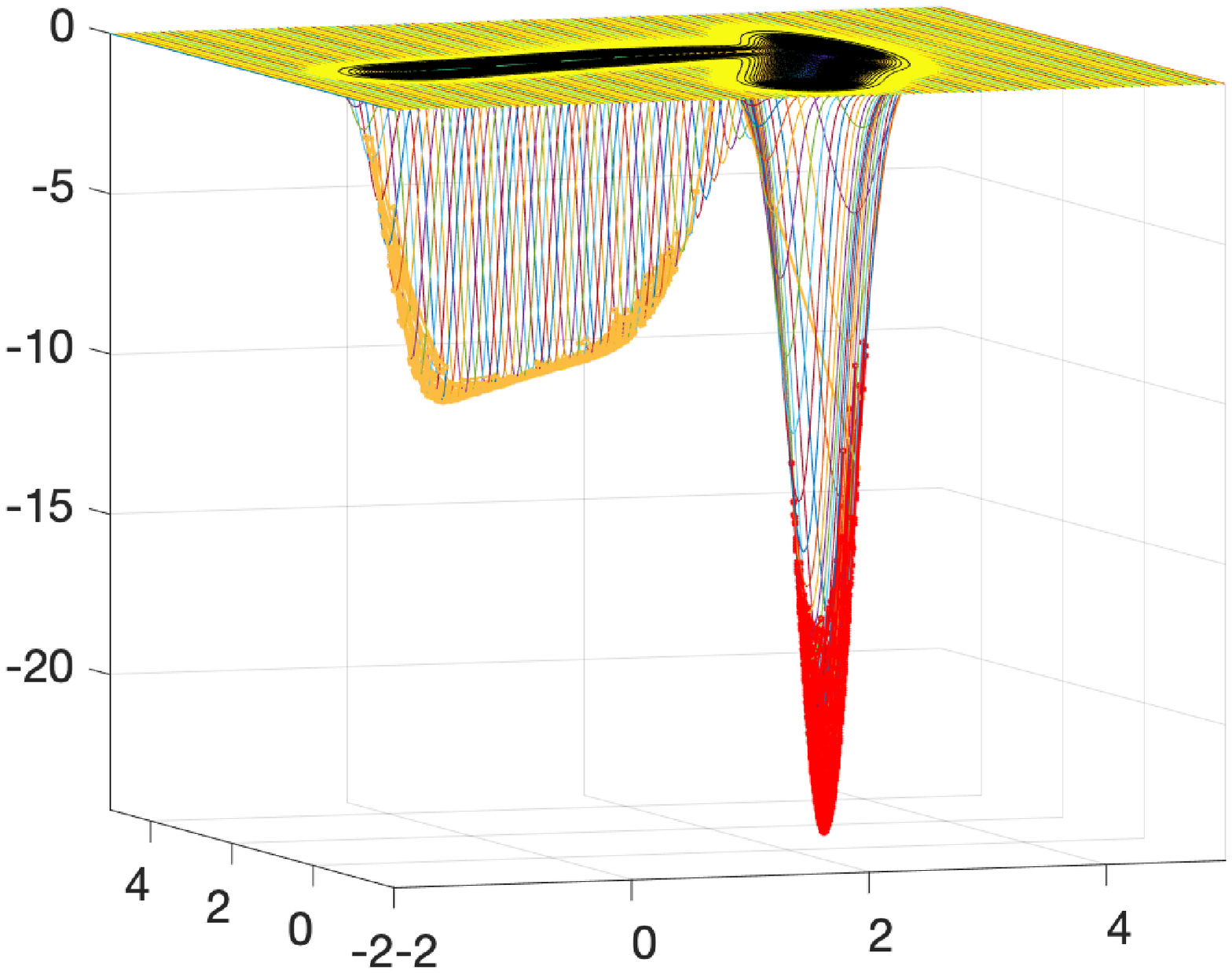}&
    \includegraphics[scale=0.19]{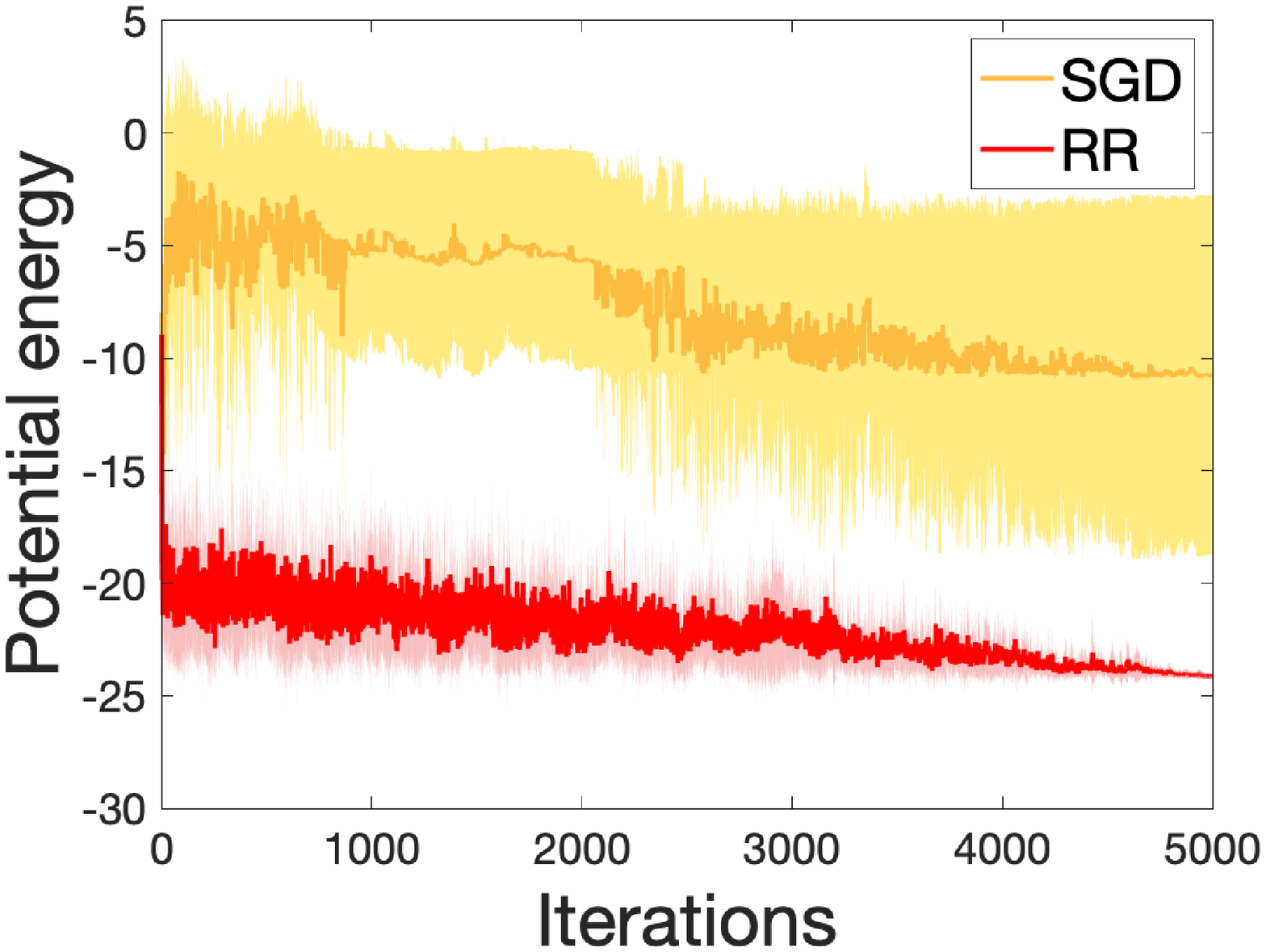}\\
    (1) M\"{u}ller-Brown potential  & & (2) A large system &
  \end{tabular}
  \caption{(1) The global optimization problem for the $1/4$-rescaled M\"{u}ller-Brown potential
    with a learning rate $\eta=0.002$. Both start at $(x_0,y_0) = (-0.8,1.0)$. (2) A large system
    with $m=1000$ atoms in $5000$ iterations. Because this problem is rather stiff, we use a
    monotonically decreasing learning rate starting from $\eta=0.002$ and ends at $\eta=1e-5$ for
    the optimization. Both trajectories start at $(x_0,y_0) = (3.0,1.0)$. For both examples (1) and
    (2), the left plot shows the trajectories for one trial, and the right plot shows the potential
    energy over $10$ trials.}
  \label{fig:pes}
\end{figure}

\section{Conclusion}\label{sec:conclusion}
To deal with data feature disparities in non-convex optimization problems, we propose in this paper a combined resampling-reweighting (RR) scheme to balance variances experienced in different regions. The RR scheme connects with the importance sampling SGD that was previously proposed and analyzed for convex optimization problems. We extend the analysis of the importance sampling SGD to non-convex problems from the viewpoints of stochastic stability and local convergence speed. Numerical experiments verify that the RR scheme outperforms the SGD in capturing the sharp global minimum, making it more reliable and faithful for optimization purposes.

\appendix
\section{Proof of Lemma \ref{lem:1}}\label{sec:appendixA}
\begin{proof}
In different regions, the associated variances are different:
\begin{itemize}
    \item In regions $(-\infty, -1)$, $(-1,0)$,  $(0, 1/K)$, $(1/K, \infty)$.
    \item With probability $a_1$, the gradients are $-1$, $1$, $\eps$, $\eps$ respectively.
    \item With probability $a_2$, the gradients are $-\eps$, $-\eps$, $-K $, $K $ respectively.
    \item Corresponding variances are $a_1a_2(1-\eps)^2$, $a_1a_2(1+\eps)^2$, $a_1a_2(K+\eps)^2$, $a_1a_2(K-\eps)^2$ in each region.
\end{itemize}
Indeed, we take $\eps \to 0$. For $x<0$, the SGD is approximately
\begin{align}
    \theta \leftarrow \theta - V'(\theta)\eta + \sqrt{a_1a_2} \eta,
\end{align}
and the corresponding SDE is
\begin{align}
    d\Theta_t =-V'(\Theta_t) dt + \sqrt{a_1a_2\eta} dW_t.
\end{align}
For $x>0$, the SGD is approximately
\begin{align}
    \theta \leftarrow \theta - V'(\theta)\eta + K\sqrt{a_1a_2} \eta,
\end{align}
and the corresponding SDE is
\begin{align}
    d\Theta_t =-V'(\Theta_t) dt + K\sqrt{a_1a_2\eta} dW_t.
\end{align}
The resulted equilibrium measures on two sides are
\begin{align}
    p(\theta) = \frac{1}{Z_1} \exp\left(-\frac{2}{a_1a_2\eta} V(\theta)\right)\quad\text{for}~\theta<0,\quad p(\theta) = \frac{1}{Z_2} \exp\left(-\frac{2}{K^2a_1a_2\eta} V(\theta)\right)\quad\text{for}~\theta>0.
\end{align}
Consider a SDE with non-smooth diffusion 
\begin{align*}
    dX_t = \mu(X_t,t) dt + \sigma(X_t,t) dB_t,
\end{align*}
the corresponding Kolmogorov forward equation is
\begin{align*}
    \d_s p(x,s) = -\d_x[\mu(x,s)p(x,s)]+ \frac{1}{2}\d^2_x[\sigma^2(x,s)p(x,s)],
\end{align*}
for $s\geq t$. For the equilibrium measure $\d_s p(\theta) = 0$. In order to have $\sigma^2 p$ be continuous at $\theta=0$, as $V(0)=0$, it suggests that
\begin{align*}
    \frac{1}{Z_1} a_1 a_2\eta = \frac{1}{Z_2} K^2 a_1 a_2\eta \quad \Longrightarrow \quad Z_2 = K^2 Z_1.
\end{align*}
\end{proof}
\section{Proof of Lemma \ref{lem:dev}}\label{sec:appendixB}
\begin{proof}
Since there exists $B>0$ such that $\Vert \nabla L(x)-\nabla L(y)\Vert_2\leq B\Vert x-y\Vert_2$ for all $x,y\in\R^d$. Then we have
\begin{align*}
    \Vert\Theta_t - \phi_t\Vert_2\leq B\int_0^T\Vert\Theta_s-\phi_s\Vert_2 ds + \sqrt{\eta} \left\Vert\int_0^T \sigma(\Theta_s)dW_s\right\Vert_2.
\end{align*}
By the Gronwall's inequality, we get
\begin{align*}
    \sup_{t\in[0,T]} \Vert\Theta_t - \phi_t\Vert_@ \leq \sqrt{\eta} e^{BT}  \sup_{t\in[0,T]} \left\Vert\int_0^T \sigma(\Theta_s) dW_s\right\Vert_2.
\end{align*}
Therefore, 
\begin{align*}
    \P_{\theta}\left(\sup_{t\in[0,T]}\Vert\Theta_t-\phi_t\Vert_2>\delta\right)&\leq\P_{\theta}\left(\sup_{t\in[0,T]}\left\Vert\int_0^T \sigma(\Theta_s) dW_s\right\Vert_2>\frac{\delta}{\sqrt{\eta}}e^{-BT}\right)\\
        &\leq \frac{\eta}{\delta^2 e^{-2BT}}\E_{\theta}\left[\left(\sup_{t\in[0,T]}\left\Vert\int_0^T \sigma(\Theta_s) dW_s\right\Vert_2\right)^2\right]\\
   &\leq \eta c' \E_{\theta}\left[\int_0^T \Tr(\sigma(\Theta_s)\sigma(\Theta_s)^{\top})ds\right],
\end{align*}
where we use Chebyshev's inequality for the second last inequality and Burkholder-Davis-Gundy maximal inequality for the last inequality.
\end{proof}

\section{Extended numerical results and parameters setup}\label{sec:appendixC}
\subsection{Numerical comparisons with different learning rates}
Here we show more numerical comparisons between SGD and the RR scheme with various learning rates. See Fig \ref{fig:moreimages1}.
\begin{figure}[htb]
    \centering 
\begin{subfigure}{0.3\textwidth}
  \includegraphics[width=\linewidth]{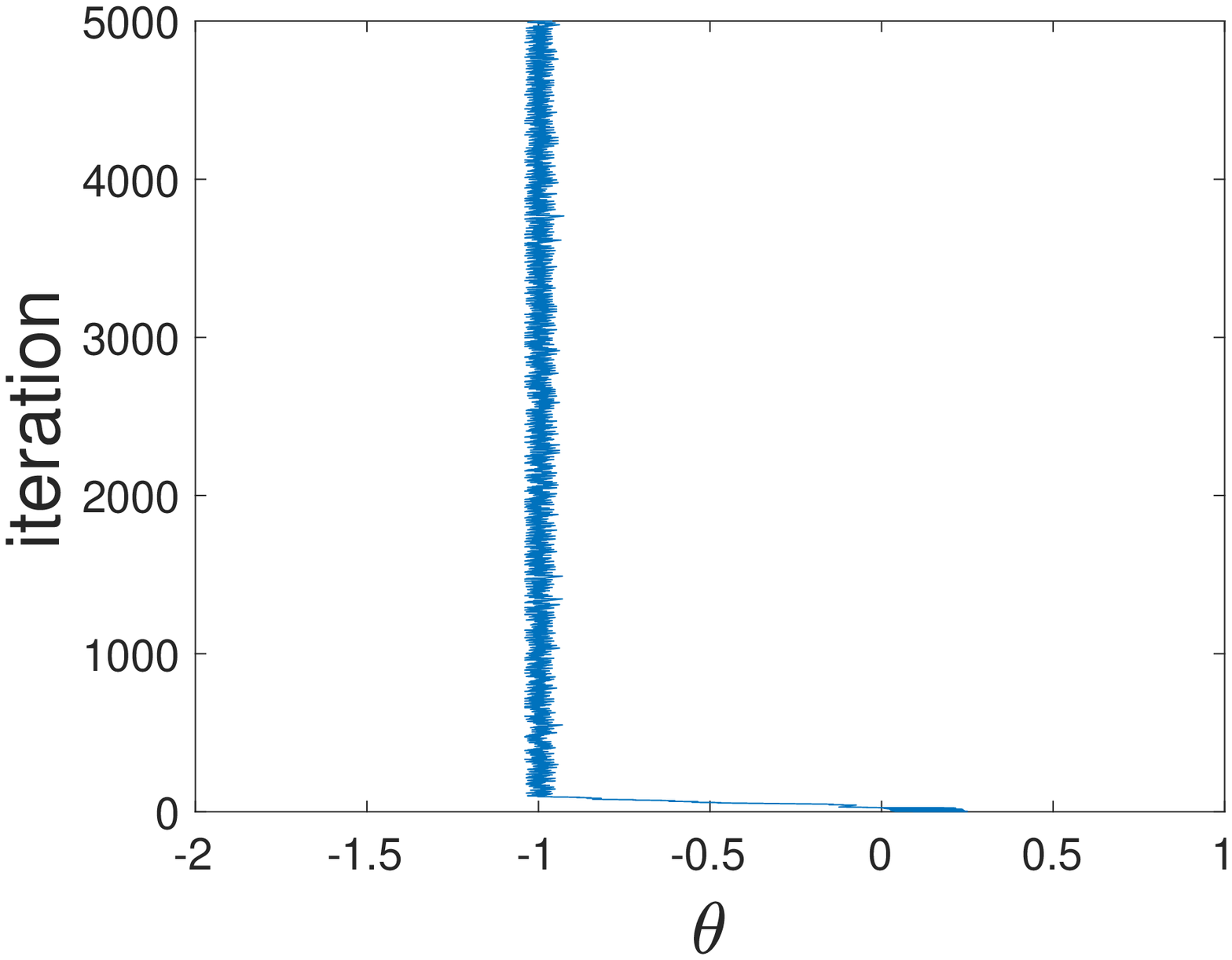}
\end{subfigure}\hfill 
\begin{subfigure}{0.3\textwidth}
  \includegraphics[width=\linewidth]{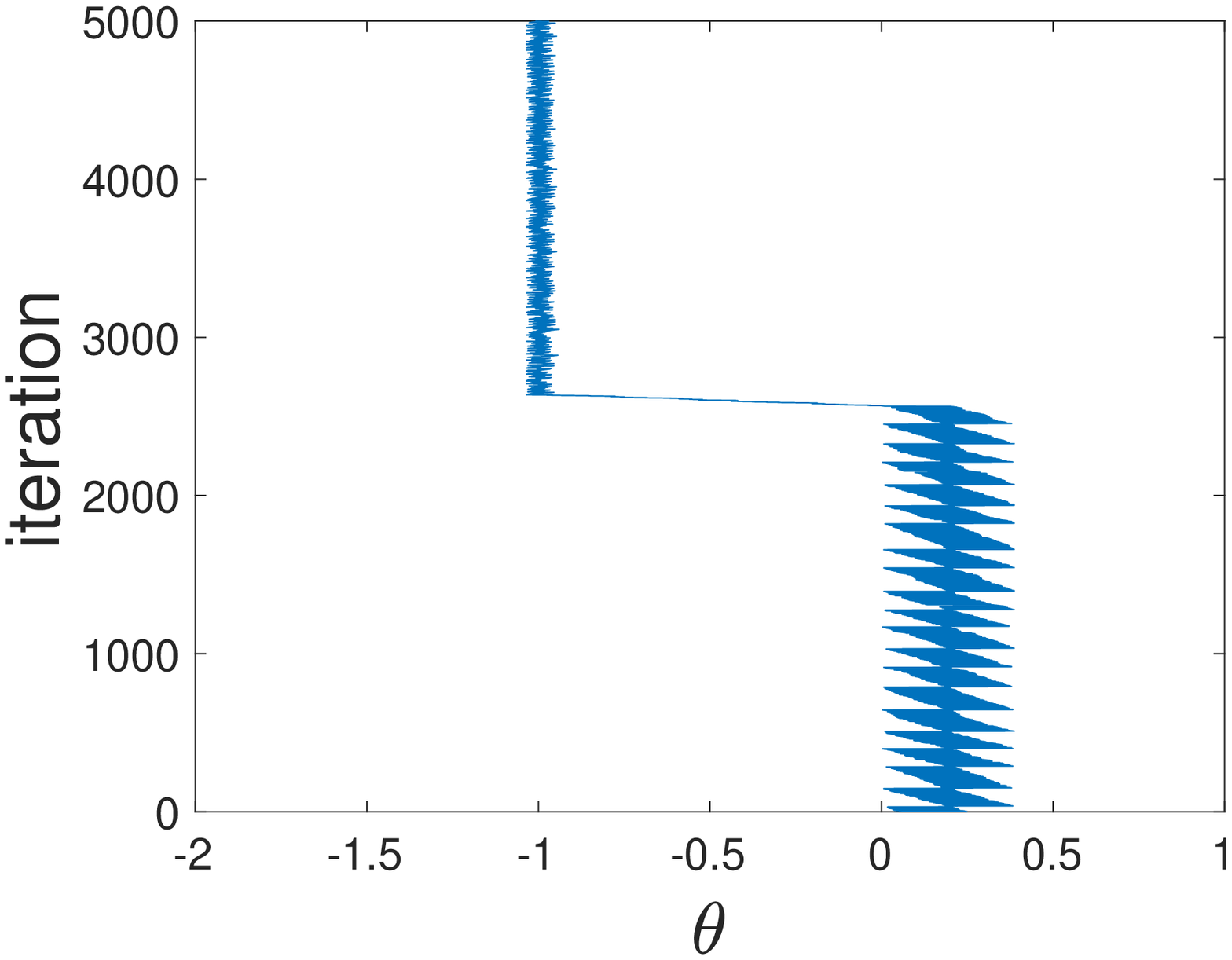}
\end{subfigure}\hfill 
\begin{subfigure}{0.3\textwidth}
  \includegraphics[width=\linewidth]{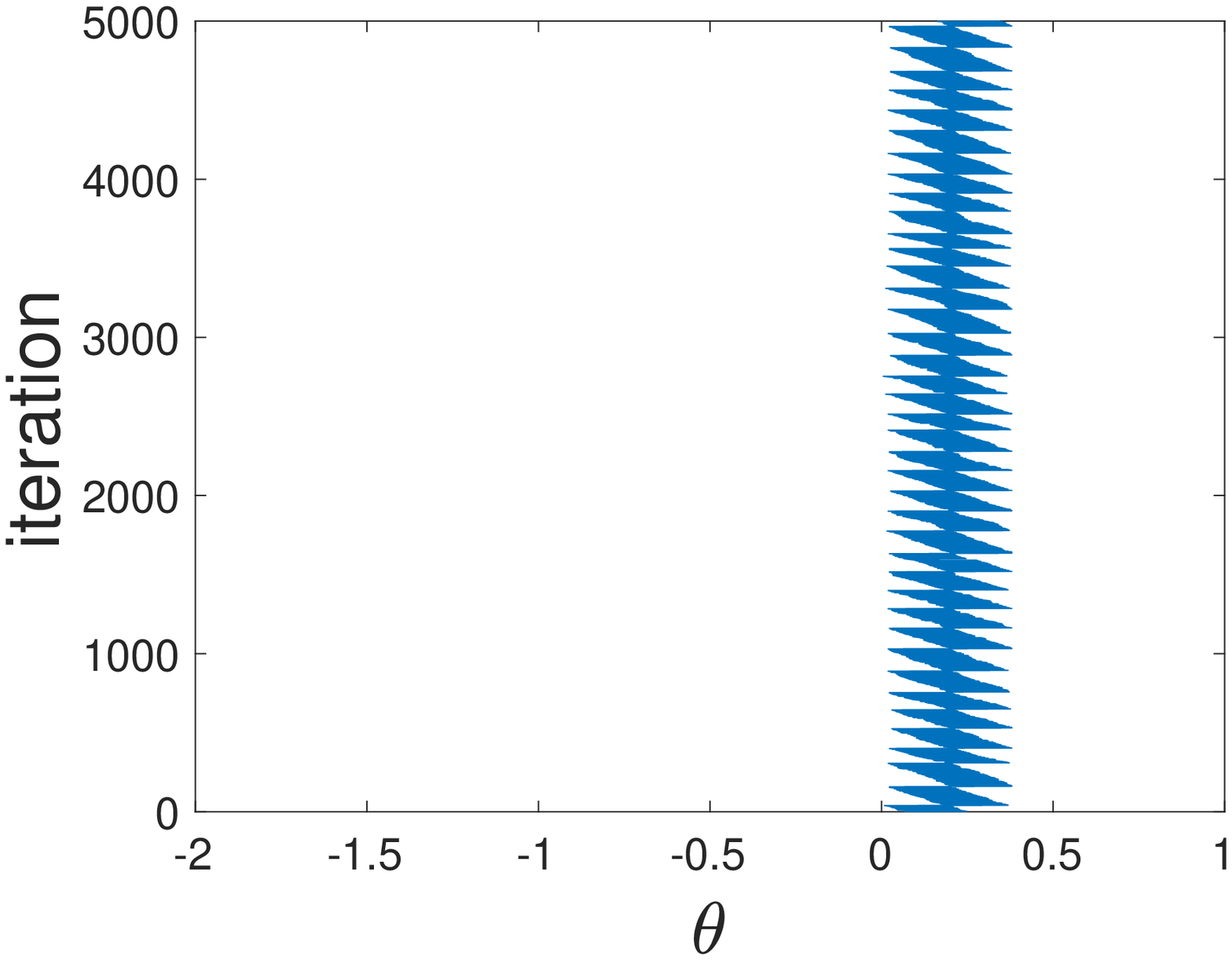}
\end{subfigure}

\medskip
\begin{subfigure}{0.3\textwidth}
  \includegraphics[width=\linewidth]{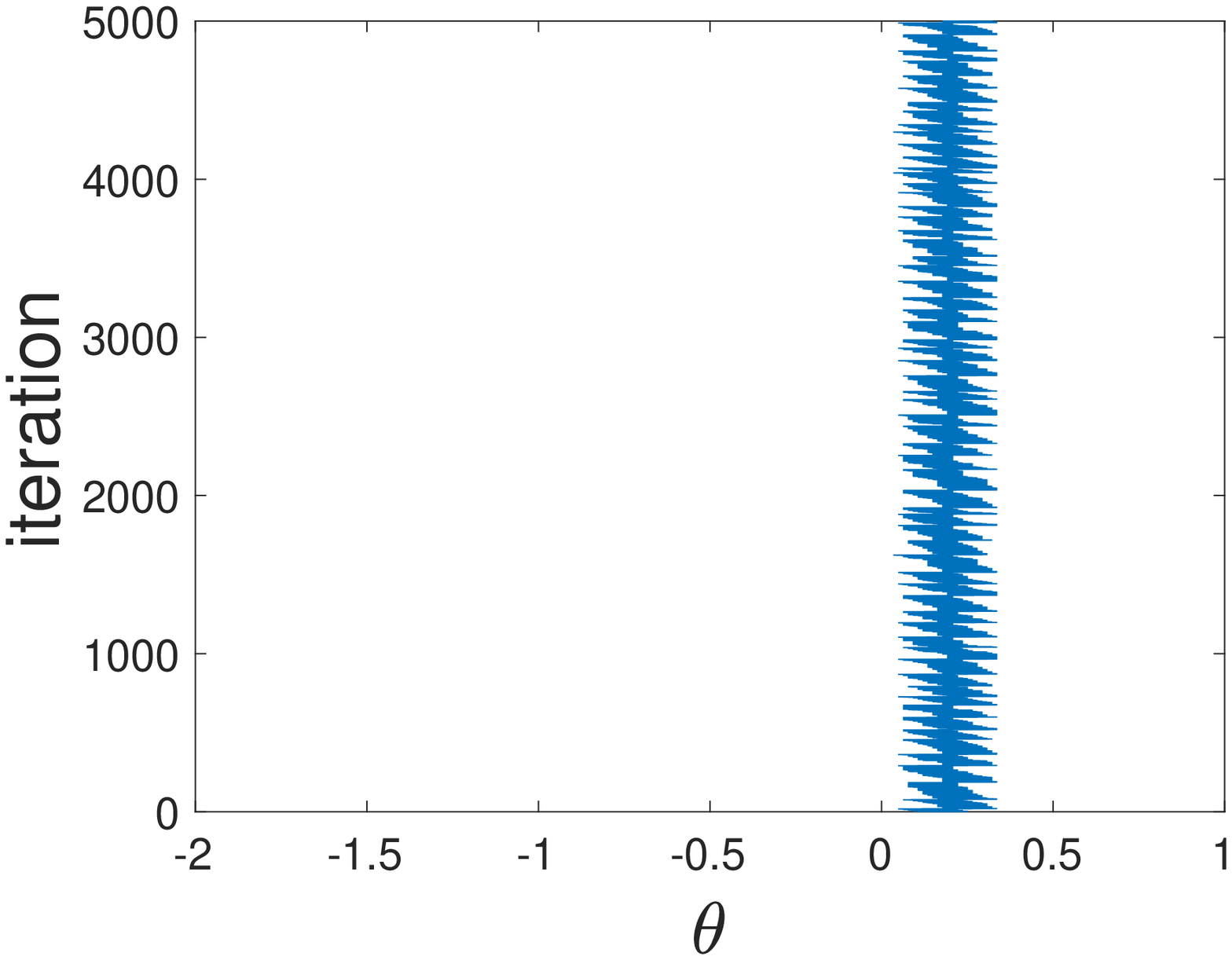}
  \caption{$\eta$ = 0.42}
\end{subfigure}\hfill 
\begin{subfigure}{0.3\textwidth}
  \includegraphics[width=\linewidth]{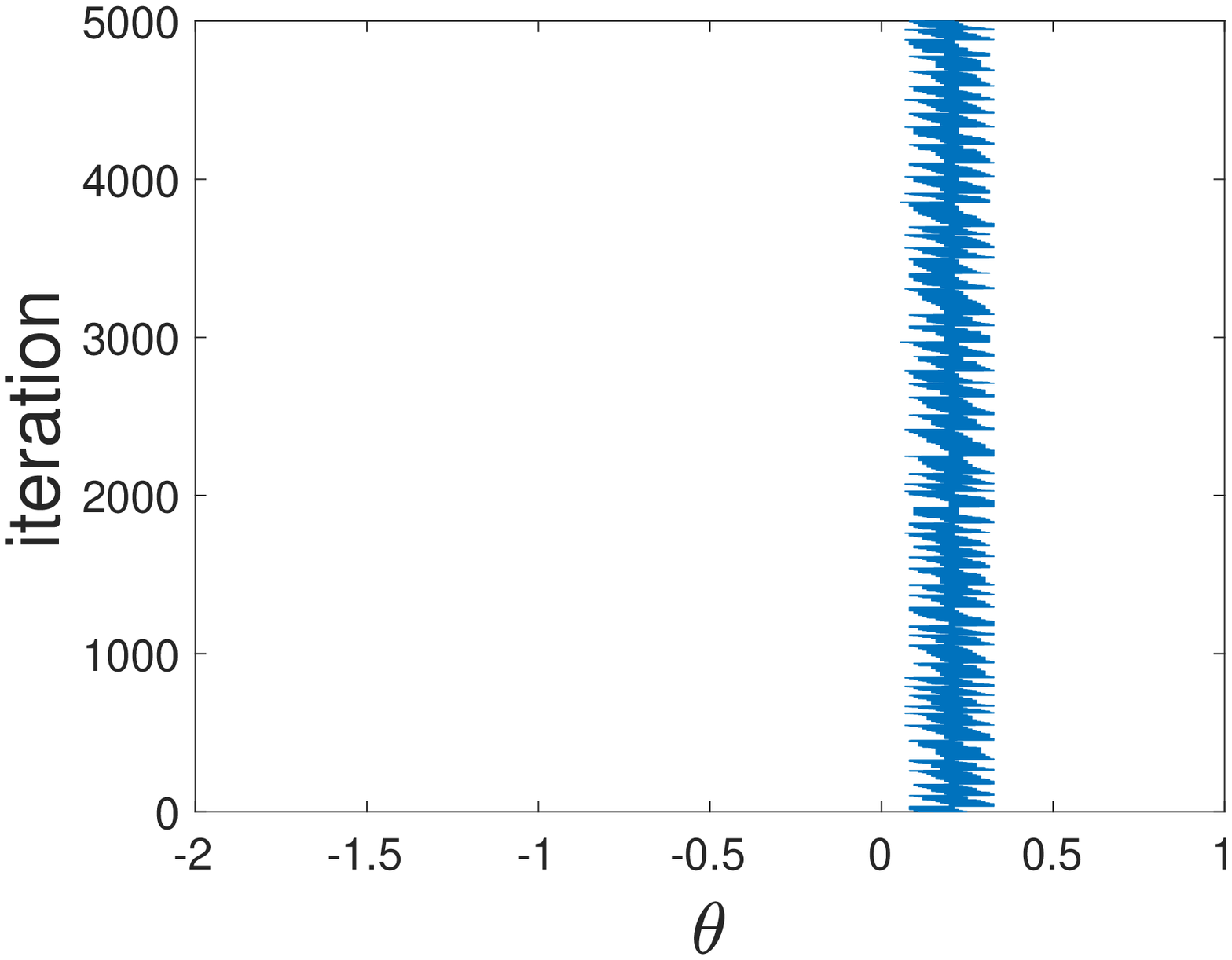}
  \caption{$\eta$ = 0.38}
\end{subfigure}\hfill 
\begin{subfigure}{0.3\textwidth}
  \includegraphics[width=\linewidth]{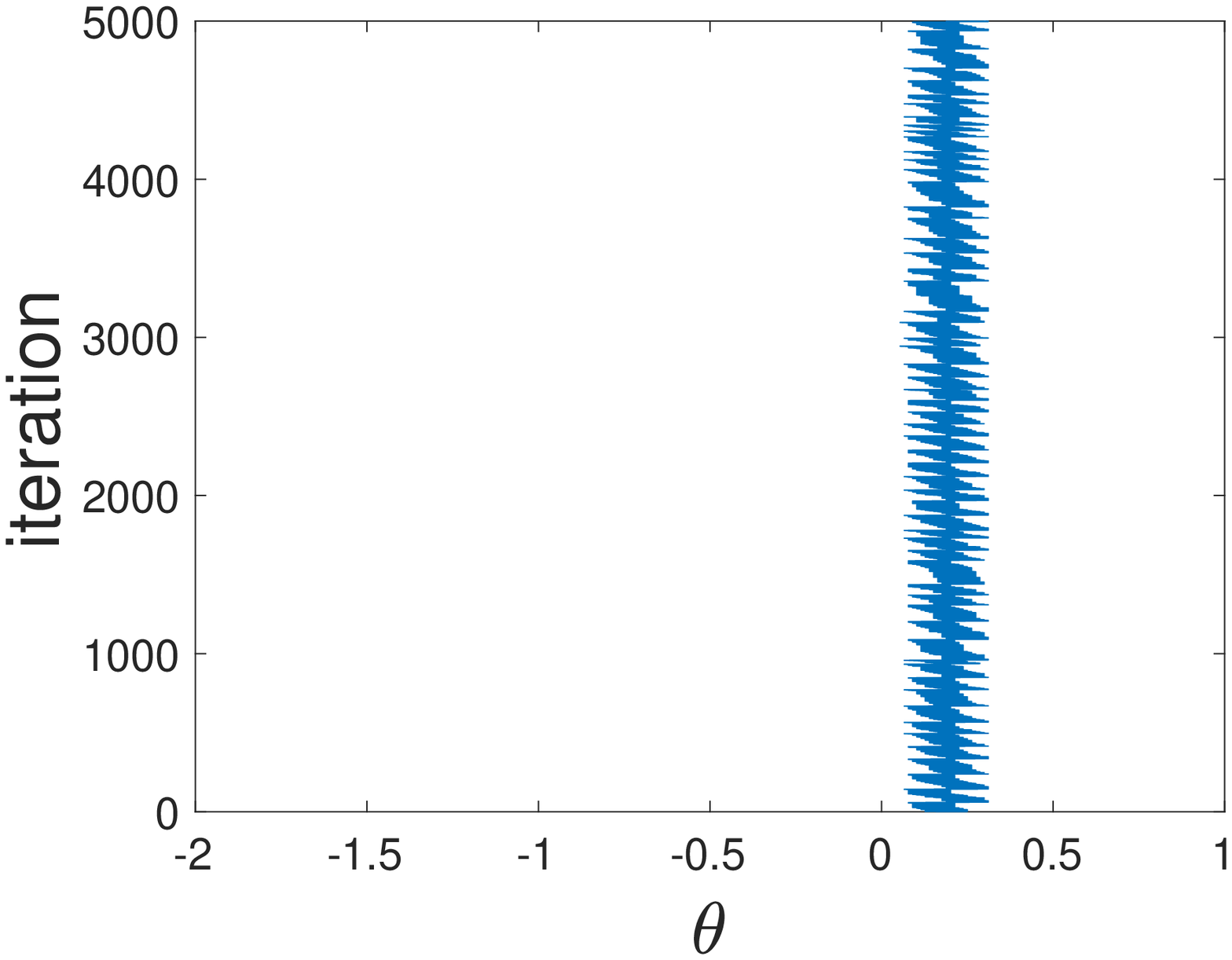}
  \caption{$\eta$ = 0.36}
\end{subfigure}
    \caption{Extended comparisons of SGD (upper row) and RR (lower row) with $a_1 = 0.4, a_2 = 0.6, \epsilon = 0.1, K=5$ at various learning rates $\eta$. All experiments start at $\theta_0 = 0.25$. We can see that unless the learning rate $\eta\leq 0.36$, RR is more reliable in the sense that its trajectory stays around the desired minimum. Even when SGD and RR both stay around the sharp global minimum, the oscillation in RR is smaller.}
\label{fig:moreimages1}
\end{figure}

\subsection{Parameters setup in Section \ref{chemistry}}
\paragraph{M\"{u}ller-Brown potential}
\begin{equation}
    \begin{aligned}
        &V(x,y) = \sum_{i=1}^4 A_i\exp\left(a_i(x-x_i)^2+b_i(x-x_i)(y-y_i)+c_i(y-y_i)^2\right),\\
        & A= (-150, -100, -170, 15),\quad a =(-1, -1, -6.5, 0.7),\\
        & b = (0, 0, 11, 0.6),\quad c = (-10, -10, -6.5, 0.7),\\
        & x = (1, 0, -0.5, -1), \quad y= (0, 0.5, 1.5, 1).
    \end{aligned}
\end{equation}
\paragraph{A large system}
\begin{equation}
    \begin{aligned}
&x_k = 2.0-0.006k, ~~\text{for}~ 1\leq k\leq 500;\quad  x_k = 1.8+0.0024k, ~~\text{for}~ 500\leq k\leq 1000.\\
&y_k = 2.0-0.006k, ~~\text{for}~  1\leq k\leq 500; \quad y_k = -1.0+0.006k, ~~\text{for}~  500\leq k\leq 1000.\\
&A_k = -50.0-0.15k, ~a_k= -2.0-0.018k, \\
&b_k = -0.1+0.0002k, ~c_k=-10+0.009k,~~\text{for}~ 1\leq k\leq 1000.
  \end{aligned}
\end{equation}
\bibliographystyle{plain}
\bibliography{ref.bib}

\begin{thebibliography}{10}

\bibitem{alain2015variance}
Guillaume Alain, Alex Lamb, Chinnadhurai Sankar, Aaron Courville, and Yoshua
  Bengio.
\newblock Variance reduction in sgd by distributed importance sampling.
\newblock {\em arXiv preprint arXiv:1511.06481}, 2015.

\bibitem{barron2019general}
Jonathan~T Barron.
\newblock A general and adaptive robust loss function.
\newblock In {\em Proceedings of the IEEE/CVF Conference on Computer Vision and
  Pattern Recognition}, pages 4331--4339, 2019.

\bibitem{canevet2016importance}
Olivier Can{\'e}vet, Cijo Jose, and Francois Fleuret.
\newblock Importance sampling tree for large-scale empirical expectation.
\newblock In {\em International Conference on Machine Learning}, pages
  1454--1462. PMLR, 2016.

\bibitem{chaudhari2018stochastic}
Pratik Chaudhari and Stefano Soatto.
\newblock Stochastic gradient descent performs variational inference, converges
  to limit cycles for deep networks.
\newblock In {\em 2018 Information Theory and Applications Workshop (ITA)},
  pages 1--10. IEEE, 2018.

\bibitem{cheng2020stochastic}
Xiang Cheng, Dong Yin, Peter Bartlett, and Michael Jordan.
\newblock Stochastic gradient and langevin processes.
\newblock In {\em International Conference on Machine Learning}, pages
  1810--1819. PMLR, 2020.

\bibitem{fehrman2020convergence}
Benjamin Fehrman, Benjamin Gess, and Arnulf Jentzen.
\newblock Convergence rates for the stochastic gradient descent method for
  non-convex objective functions.
\newblock {\em Journal of Machine Learning Research}, 21, 2020.

\bibitem{hoffer2017train}
Elad Hoffer, Itay Hubara, and Daniel Soudry.
\newblock Train longer, generalize better: closing the generalization gap in
  large batch training of neural networks.
\newblock In {\em Proceedings of the 31st International Conference on Neural
  Information Processing Systems}, pages 1729--1739, 2017.

\bibitem{jain2017non}
Prateek Jain and Purushottam Kar.
\newblock Non-convex optimization for machine learning.
\newblock {\em Foundations and Trends{\textregistered} in Machine Learning},
  10(3-4):142--336, 2017.

\bibitem{johnson2018training}
Tyler~B Johnson and Carlos Guestrin.
\newblock Training deep models faster with robust, approximate importance
  sampling.
\newblock {\em Advances in Neural Information Processing Systems},
  31:7265--7275, 2018.

\bibitem{katharopoulos2018not}
Angelos Katharopoulos and Fran{\c{c}}ois Fleuret.
\newblock Not all samples are created equal: Deep learning with importance
  sampling.
\newblock In {\em International conference on machine learning}, pages
  2525--2534. PMLR, 2018.

\bibitem{kochkov2018variational}
Dmitrii Kochkov and Bryan~K Clark.
\newblock Variational optimization in the ai era: Computational graph states
  and supervised wave-function optimization.
\newblock {\em arXiv preprint arXiv:1811.12423}, 2018.

\bibitem{leach2001molecular}
Andrew~R Leach.
\newblock {\em Molecular modelling: principles and applications}.
\newblock Pearson education, 2001.

\bibitem{li2017stochastic}
Qianxiao Li, Cheng Tai, and E~Weinan.
\newblock Stochastic modified equations and adaptive stochastic gradient
  algorithms.
\newblock In {\em International Conference on Machine Learning}, pages
  2101--2110, 2017.

\bibitem{loshchilov2015online}
Ilya Loshchilov and Frank Hutter.
\newblock Online batch selection for faster training of neural networks.
\newblock {\em arXiv preprint arXiv:1511.06343}, 2015.

\bibitem{ma2021sobolev}
Chao Ma and Lexing Ying.
\newblock The sobolev regularization effect of stochastic gradient descent.
\newblock {\em arXiv preprint arXiv:2105.13462}, 2021.

\bibitem{ma2020normalized}
Xingjun Ma, Hanxun Huang, Yisen Wang, Simone Romano, Sarah Erfani, and James
  Bailey.
\newblock Normalized loss functions for deep learning with noisy labels.
\newblock In {\em International Conference on Machine Learning}, pages
  6543--6553. PMLR, 2020.

\bibitem{mandt2016variational}
Stephan Mandt, Matthew Hoffman, and David Blei.
\newblock A variational analysis of stochastic gradient algorithms.
\newblock In {\em International conference on machine learning}, pages
  354--363. PMLR, 2016.

\bibitem{mertikopoulos2020almost}
Panayotis Mertikopoulos, Nadav Hallak, Ali Kavis, and Volkan Cevher.
\newblock On the almost sure convergence of stochastic gradient descent in
  non-convex problems.
\newblock In {\em NeurIPS'20: The 34th International Conference on Neural
  Information Processing Systems}, 2020.

\bibitem{muller1979location}
Klaus M{\"u}ller and Leo~D Brown.
\newblock Location of saddle points and minimum energy paths by a constrained
  simplex optimization procedure.
\newblock {\em Theoretica chimica acta}, 53(1):75--93, 1979.

\bibitem{needell2014stochastic}
Deanna Needell, Rachel Ward, and Nati Srebro.
\newblock Stochastic gradient descent, weighted sampling, and the randomized
  kaczmarz algorithm.
\newblock {\em Advances in neural information processing systems},
  27:1017--1025, 2014.

\bibitem{rousseeuw1984least}
Peter~J Rousseeuw.
\newblock Least median of squares regression.
\newblock {\em Journal of the American statistical association},
  79(388):871--880, 1984.

\bibitem{simsekli2019tail}
Umut Simsekli, Levent Sagun, and Mert Gurbuzbalaban.
\newblock A tail-index analysis of stochastic gradient noise in deep neural
  networks.
\newblock In {\em International Conference on Machine Learning}, pages
  5827--5837. PMLR, 2019.

\bibitem{strohmer2009randomized}
Thomas Strohmer and Roman Vershynin.
\newblock A randomized kaczmarz algorithm with exponential convergence.
\newblock {\em Journal of Fourier Analysis and Applications}, 15(2):262--278,
  2009.

\bibitem{wojtowytsch2021stochastic}
Stephan Wojtowytsch.
\newblock Stochastic gradient descent with noise of machine learning type. part
  i: Discrete time analysis.
\newblock {\em arXiv preprint arXiv:2105.01650}, 2021.

\bibitem{wu2018sgd}
Lei Wu, Chao Ma, and Weinan E.
\newblock How sgd selects the global minima in over-parameterized learning: A
  dynamical stability perspective.
\newblock In {\em Advances in Neural Information Processing Systems}, pages
  8279--8288, 2018.

\bibitem{xie2020diffusion}
Zeke Xie, Issei Sato, and Masashi Sugiyama.
\newblock A diffusion theory for deep learning dynamics: Stochastic gradient
  descent escapes from sharp minima exponentially fast.
\newblock In {\em International Conference on Learning Representations}, 2021.

\bibitem{zhao2015stochastic}
Peilin Zhao and Tong Zhang.
\newblock Stochastic optimization with importance sampling for regularized loss
  minimization.
\newblock In {\em international conference on machine learning}, pages 1--9.
  PMLR, 2015.

\bibitem{zhou2020towards}
Pan Zhou, Jiashi Feng, Chao Ma, Caiming Xiong, Steven Hoi, and Weinan E.
\newblock Towards theoretically understanding why sgd generalizes better than
  adam in deep learning.
\newblock In {\em Advances in Neural Information Processing Systems},
  volume~33, 2020.

\bibitem{zhu2018anisotropic}
Zhanxing Zhu, Jingfeng Wu, Bing Yu, Lei Wu, and Jinwen Ma.
\newblock The anisotropic noise in stochastic gradient descent: Its behavior of
  escaping from minima and regularization effects.
\newblock In {\em Proceedings of the 36th International Conference on Machine
  Learning}, pages 7654--7663, 2019.

\end{thebibliography}


\end{document}